\documentclass{article}
\usepackage{ijcai26}

\usepackage{times}
\usepackage{soul}
\usepackage{url}
\usepackage[hidelinks]{hyperref}
\usepackage[utf8]{inputenc}
\usepackage[small]{caption}
\usepackage{graphicx}
\usepackage{newfloat}
\usepackage{amsmath,amssymb,amsthm}
\usepackage[vlined,linesnumbered,ruled,titlenotnumbered]
{algorithm2e}
\usepackage{booktabs}
\usepackage[switch]{lineno}
\usepackage{xcolor}
\usepackage{thm-restate}

\newcommand{\ones}[1]{|#1|_1}   
\newcommand{\zeros}[1]{|#1|_0}  

\newcommand{\JUMP}{m\text{-}\textsc{OJZJ}_k\xspace}

\newcommand{\JUMPFOUR}{4\text{-}\textsc{OJZJ}_k\xspace}

\newcommand{\RRRfull}{\textsc{RealRoyalRoad}\xspace}               

\newcommand{\OMMfull}{\textsc{OneMinMax}\xspace}        

\newcommand{\OJZJ}{\textsc{OJZJ}\xspace}                           
\newcommand{\OJZJfull}{\textsc{OneJumpZeroJump}\xspace}            

\newcommand{\expect}[1]{\mathrm{E}\left[#1\right]}        

\newcommand{\rp}{\mathrm{rp}}
\newcommand{\refer}{\mathcal{R}_p}

\newcommand{\nsga}{NSGA\nobreakdash-II\xspace}
\newcommand{\nsgaIII}{NSGA\nobreakdash-III\xspace}

\linenumbers

\newtheorem{theorem}     {Theorem}
\newtheorem{lemma}      [theorem]{Lemma}

\title{On the Impact of Crossover in Many-Objective Optimization: A Runtime Analysis of NSGA-III}

\author{
Andre Opris
    \affiliations
    Chair of Algorithms for Intelligent Systems, University of Passau, Passau, Germany
    \emails
    andre.opris@uni-passau.de
}

\begin{document}
\nolinenumbers
\maketitle

\begin{abstract}
In recent years, a theoretical understanding has rapidly advanced regarding how popular multi-objective evolutionary algorithms (MOEAs) can optimize many-objective problems. However, the benefits of using crossover in many-objective optimization are theoretically not understood, except for specifically designed benchmark functions tuned to particular crossover operators, and still lag significantly behind its practical use. In this paper, we build upon this line of research and present a theoretical runtime analysis of the widely used NSGA-III algorithm on the classical $m$-objective $m$-\OJZJfull function ($m$-\OJZJ for short). Our results demonstrate that NSGA-III with crossover optimizes $m$-\OJZJ asymptotically faster than NSGA-III without crossover for any number $m$ of objectives for huge parameter regimes. We complement our analysis by providing a lower runtime bound on $4$-OJZJ when crossover is turned off. 
\end{abstract}

\section{Introduction}
Multi-objective evolutionary algorithms (MOEAs), such as the non-dominated sorting genetic algorithm II (NSGA-II)~\cite{Deb2002} and its extension for many-objective optimization, \nsgaIII~\cite{DebJain2014}, have been used in thousands of applications and together have received more than 60,000 citations. Applications span a wide range of domains, including machine learning~\cite{ZHU20251028}, bioengineering~\cite{NSGAIIBio}, and artificial intelligence~\cite{LUUKKONEN2023102537}, where also many studies involve four or more objectives. The most prominent EMOA for optimizing bi-objective problems is \nsga~\cite{Deb2002} (see~\cite{VIJAI2025100550} for empirical results, or~\cite{ZhengLuiDoerrAAAI22,Dang2023} for rigorous ones), however, which is not well suited for solving problems where the number of objective increases~\cite{CAMPOSCIRO20161272,11015924}. The reason is, that in the bi-objective case, an ordering with respect to the first objective of non-dominated solutions implies also an ordering with respect to the second, which makes the crowding distance, the second tie breaker of NSGA-II, effective. However, this relation breaks down already for three objectives, which indicates the inefficiency of NSGA-II in the many-objective setting. However, \nsgaIII, a refinement of NSGA-II, uses reference points instead of crowding distance to ensure that the solution set will be well-distributed across the objective space in a very natural way (see ~\cite{DebJain2014}). It has been shown both theoretically ~\cite{WiethegerD23,OprisNSGAIII} and empirically ~\cite{CAMPOSCIRO20161272} that \nsgaIII effectively optimizes many-objective problems, which is a challenging task, as the size of the Pareto front and the number of incomparable solutions grow exponentially with the number of objectives. However, these first rigorous runtime analyses of the state-of-the-art \nsgaIII were published just a few years ago. As a result, its theoretical understanding still lags behind its practical achievements, although some progress has been made since then~\cite{Opris2025multimodal,OPRIS2026,Opris26PopDyn}. There is still a significant gap in our theoretical understanding of how crossover contributes to many-objective optimization, despite its importance as a fundamental operator in evolutionary computation~\cite{10.1145/3009966} which hinder our general understanding when and why MOEAs perform well. Beyond up to two papers~\cite{Opris2025,OPRIS2026}, which theoretically demonstrate that crossover yields an exponential runtime speedup on handcrafted benchmark functions, we are not aware of any further theoretical results, in contrast to the bi-objective setting~\cite{Dang2023,DoerrQu2023a}. But further empirical results on many different multi-objective constrained, and unconstrained problems~\cite{CrossoverEmpirical1}, and on multi- and many-objective Knapsack problems~\cite{CrossoverKnap} show the huge potential of crossover operators in many-objective optimization. 

\textbf{Our contribution:} In this paper, we build on the considerations from~\cite{Opris2025multimodal,DoerrQu2023a}, and provide a theoretical runtime analysis of \nsgaIII on the many-objective $\JUMP$ benchmark with and without crossover for any number of objectives $m$, and show that \nsgaIII without crossover needs $O(m^2n\ln(n/m) + \delta mn^k(1+2n/m)^{m/2}/\mu)$ generations with probability $1-o(1)-e^{-(\delta-3)m}$ to cover the whole Pareto front. Here, $\mu$ denotes the population size, $n$ the problem size, $k$ is a parameter specific to the problem at hand, and $\delta>3$ is an additional parameter. The expected number of generations is $O(m^2n\ln(n/m) + mn^k(1+2n/m)^{m/2}/\mu)$. One sees that this number is asymptotically the same for a wide range of population sizes $\mu$. This robustness stems from the observation that NSGA-III can retain many individuals with the same fitness vector due to how it associates solutions with and iterates over reference points. This aspect was not considered in~\cite{DoerrNearTight}. In particular, our analysis improves the runtime bound given there for population sizes asymptotically larger than the size of the Pareto front, and it even extends the analysis from~\cite{Opris2025multimodal} to an arbitrary number of objectives. With uniform crossover, we show that with probability $1-o(1)-e^{-(\delta-3)m}$, the number of generations until the whole Pareto front is covered is at most $O(k\delta m^{k+1}n^kc^k\mu/k!)$ for \nsgaIII, and the expected number of generations is $O(km^{k+1}n^kc^k\mu/k!)$ for a suitable constant $c>0$. 
Thus, one obtains a speedup of order $\Omega((k-1)!(1+2n/m)^{m/2}/(\mu^2c^km^k))$ in the expected runtime compared to the case without crossover. 
If $k = \Omega(n/\ln(n))$, $m=O(\log(n))$ and $\mu = O((1+2n/m)^{m/2})$, this speedup even becomes exponential. In a nutshell, our proof applies the arguments from~\cite{DoerrQ23b} for the bi-objective case sequentially to all blocks. A key difference is that we also need high-probability guarantees for finding a single Pareto-optimal solution in order to obtain all of them in reasonable time in parallel. Finally, we complement our analysis by also providing a lower runtime bound of \nsgaIII without crossover on $4$-OJZJ of $\Omega(n^k/\mu)$ generations, which is by a factor of $(k-1)!/(\mu^2d^k)$ larger than the upper runtime bound derived with crossover for a constant $d>0$. This factor is also exponential if $k = \Omega(n/\ln(n))$ and $\mu = \text{poly}(n)$. Extending this lower bound to a larger number of objectives appears considerably more difficult, since the interactions between single objectives becomes more complex, and requires a much deeper understanding of the underlying population dynamics which extends the scope of this paper. However, we expect our results to extend similarly to other MOEAs such as GSEMO, SPEA2, SMS-EMOA, and variants of PAES-25~\cite{OprisPAES}.\\
\textbf{Related work:} In single-objective optimization, the benefits of crossover are much better understood than in the many-objective setting. On pseudo-Boolean benchmark problems such as \textsc{JUMP}$_k$, where a fitness valley of size~$k$ must be crossed, it has been rigorously shown that \emph{uniform} crossover yields a speedup depending on $k$ and the crossover probability~\cite{Jansen2002,Koetzing2011,Dang2017,Opris24Jump,opris2026parentselectionmechanismselitist}. An exponential performance gap in the runtime was proven in~\cite{Jansen2005c} on a function \RRRfull, which is specifically designed for $1$-point crossover. These insights have been used to prove also advantages through crossover for combinatorial optimization problems like shortest paths~\cite{Doerr2012,DOERR201312}, solving complex data clustering problems~\cite{SuttonDataClus} or NP-hard graph problems~\cite{SuttonGraph24}, or even in more complex search spaces like permutation spaces~\cite{OprisPermutation}. \\
In multiobjective optimization, only a few variants of the global simple evolutionary multiobjective optimizer (GSEMO) with crossover have been studied~\cite{Qian2013,Qian_Bian_Feng_2020,Doerr2022}, and rigorous analyses of \nsga with crossover on classical benchmark problems~\cite{DoerrQ23b} and multi-objective variants of \RRRfull~\cite{Dang2023} have been conducted. However, these results were only restricted on bi-objective problems. The theoretical analysis of \nsgaIII only succeeded recently. Based on a rigorous analysis of GSEMO on classical benchmark functions~\cite{Laumanns2004}, in \cite{WiethegerD23} the first runtime analysis of \nsgaIII on the easy $3$-\OMMfull problem was conducted. 
This was then generalized by \cite{OprisNSGAIII,Opris2025multimodal} on more than three objectives, where also key structural insights into the working principles of NSGA-III are given. Also, other pseudo-Boolean functions, where one has also to reach the Pareto front at a first glance, have been analyzed there. Then, in~\cite{DoerrNearTight,Opris2025multimodal} $m$-OJZJ has been analyzed, but without investigating crossover. 
Similar analyses have then also been conducted on other popular MOEAs like the SPEA-2~\cite{RenSPEA2} and the SMS-EMOA~\cite{Zheng_Doerr_2024}. First theoretical results which showcase that MOEAs, particularly NSGA-III and SPEA-2, are quite robust with respect to the chosen population size can be found in~\cite{Opris26PopDyn,DoerrKS2026}. However, apart from~\cite{Opris2025,OPRIS2026}, we are not aware of any theoretical results on whether or how crossover can be beneficial in many-objective optimization. Moreover, these results are limited to Royal Road functions designed for specific crossover operators. 

\section{Preliminaries}
\textbf{Notation}: For a finite set $A$, we write $|A|$ for its cardinality. For $n \in \mathbb{N}$, define $[n] := \{1, \dots, n\}$ and $1^n$ the vector of length $n$ with only ones, while $0^n$ is the corresponding vector with only zeros. Given a bit string $x \in \{0,1\}^n$, let $\ones{x}$ and $\zeros{x}$ denote the number of ones and zeros in $x$, respectively. For $x,y \in \{0,1\}^n$ denote by $H(x,y) = \sum_{i=1}^n |x_i-y_i|$ the Hamming distance of $x$ and $y$.
We use $\ln$ to denote the natural logarithm. Let $Y$ and $Z$ be random variables taking values in $\mathbb{N}_0$. We say that $Z$ \emph{stochastically dominates} $Y$ if $\Pr(Z \le c) \le \Pr(Y \le c)$ for all $c \ge 0$. Consider an $m$-objective function $f : \{0,1\}^n \to \mathbb{N}_0^m, x  \mapsto (f_1(x), \ldots, f_m(x))$, and let $f_{\max}:=\max\{f_j(x) \mid j \in [m], x \in \{0,1\}^m\}$ the maximum possible value of an objective. For two search points $x, y \in \{0,1\}^n$, we say that $x$ \emph{weakly dominates} $y$, denoted $x \succeq y$, if $f_i(x) \ge f_i(y)$ for all $i \in [m]$. If, in addition, at least one of these inequalities is strict, then $x$ \emph{dominates} $y$, written $x \succ y$. If neither $x \succeq y$ nor $y \succeq x$ holds, the two points are called \emph{incomparable}. We say that a subset $S \subseteq \{0,1\}^n$ consists only of \emph{mutually incomparable solutions} if every pair of elements in $S$ is incomparable. A solution $x$ is called \emph{Pareto-optimal} if it is not dominated by any other search point in $\{0,1\}^n$, and we call the set $\{f(x) \mid x \in \{0,1\}^n \text{ is Pareto optimal}\}$ the \emph{Pareto front}. 
For a population $P_t$ and a fitness vector $v \in \mathbb{N}_0^m$, the \emph{cover number} $c_t(v)$ is defined as the number of individuals $x \in P_t$ with $f(x) = v$. We say that $v$ is \emph{covered} if $c_t(v) \ge 1$.\\
\begin{algorithm}[t]
	Initialise $P_0 \sim \text{Unif}( (\{0,1\}^n)^{\mu})$\\
	\For{$t:= 0$ to $\infty$}{
		Initialise $Q_t:=\emptyset$\\
		\For{$i=1$ to $\mu/2$}{
                Select $a_1,a_2$ from $P_t$ uniformly at random\\
                Select $u \sim \text{Unif}([0,1])$\\
                \If{$u \leq p_c$}{
                    Create $y_1$ by uniform crossover on $a_1,a_2$\\
                    Create $y_2$ by uniform crossover on $a_1,a_2$}
                \Else{Create $y_1,y_2$ as copies from $a_1,a_2$}
			Create $z_1,z_2$ by standard bit mutation on $y_1,y_2$\\
            Update $Q_t:=Q_t \cup \{z_1,z_2\}$\\
		}
		Set $R_t := P_t \cup Q_t$\\
		Partition $R_t$ into layers $F^1_t,F^2_t,\ldots ,F^k_t$ of non-dominated fitness vectors\\
            Find $i^* \geq 1$ such that $\sum_{i=1}^{i^*-1} \lvert{F_t^i}\rvert < \mu$ and $\sum_{i=1}^{i^*} \lvert{F_t^i}\rvert \geq \mu$\\
            Compute $Y_t = \bigcup_{i=1}^{i^*-1} F_t^i$\\
            Choose $\tilde{F}_t^{i^*} \subset F_t^{i^*}$ such that $\lvert{Y_t \cup \tilde{F}_t^{i^*}}\rvert = \mu$ with Algorithm~\ref{alg:Survival-Selection}\\
		Create the next population $P_{t+1} := Y_t \cup \tilde{F}^t_{i^*}$\\
	}
	\caption{NSGA-III on an $m$-objective function $f$ with population size $\mu$ and crossover probability $p_c$}
	\label{alg:nsga-iii}
\end{algorithm}

\textbf{The \nsgaIII algorithm}: The \nsgaIII algorithm (~\cite{DebJain2014}) with crossover probability $p_c \in [0,1)$ and even population size $\mu$ is presented in Algorithm~\ref{alg:nsga-iii}. Initially, a population $P_0$ of size $\mu$ is generated by selecting $\mu$ individuals uniformly at random from $\{0,1\}^n$. In each generation $t$, an offspring population $Q_t$ of size $\mu$ is created by performing the following operations $\mu/2$ times. First, two parents $a_1$ and $a_2$ are selected uniformly at random from $P_t$. Then, \emph{uniform crossover} is applied to $(a_1,a_2)$ two times with probability $p_c$ to produce two intermediate solutions $y_1$ and $y_2$. That is, for creating one solution, and for each position $i \in [n]$ independently, the entry from $a_1$ is taken with probability $1/2$, and otherwise the entry from $a_2$. If uniform crossover is not applied, $y_1$ and $y_2$ are exact copies of $a_1$ and $a_2$. Finally, two offspring $z_1$ and $z_2$ are generated by applying \emph{standard bit mutation} to $y_1$ and $y_2$, that is, each bit is flipped independently with probability $1/n$.

\begin{algorithm}[t]
        Compute the normalisation $f^n$ of $f$\\
        Associate each $x\in Y_t \cup F_t^{i^*}$ with its reference point $\rp(x)$ such that the distance between $f^n(x)$ and the line through the origin and $\rp(x)$ is minimised\\ 
        For each $r \in \refer$, set $\rho_r:=|\{x\in Y_t \mid \mathrm{rp}(x)=r\}|$\\
        Initialise $\tilde{F}_t^{i^*}=\emptyset$ and $R':=\refer$\\
        \While{$|\tilde{F}_t^{i^*}| < \mu/2$\label{line:while}}{
        Determine $r_{\min} \in R'$ such that $\rho_{r_{\min}}$ is minimal (where ties are broken randomly)\\
        Determine $x_{r_{\min}} \in F_t^{i^*} \setminus \tilde{F}_t^{i^*}$ which is associated with $r_{\min}$ and minimises the distance between the vectors $f^n(x_{r_{\min}})$ and $r_{\min}$ (where ties are broken randomly)\label{line:association}\\
        \If{$x_{r_{\min}}$ exists}{
        $\tilde{F}_t^{i^*} = \tilde{F}_t^{i^*} \cup \{x_{r_{\min}}\}$\\
        $\rho_{r_{\min}} = \rho_{r_{\min}} + 1$\\
            \If{$\lvert{Y_t}\rvert + \lvert{\tilde{F}_t^{i^*}}\rvert = \mu$}
                {\Return{$\tilde{F}_t^{i^*}$}
                }
            }
            \Else{$R'=R' \setminus \{r_{\min}\}$}
        }
        Select $\mu - |Y_t| - \mu/2$ distinct individuals uniformly at random from $F_t^{i^*} \setminus \tilde{F}_t^{i^*}$, and add them to $\tilde{F}_t^{i^*}$\\
        \Return{$\tilde{F}_t^{i^*}$}
        
	\caption{Selection procedure utilizing a set $\refer$ of reference points for maximizing a function, including uniform selection.}
	\label{alg:Survival-Selection}
\end{algorithm}

During the survival selection, the parent and offspring populations $P_t$ and $Q_t$ are merged into $R_t$ and $R_t$ is updated by partitioning $R_t$ into layers $F^1_t,F^2_t,\dots$ using the \emph{non-dominated sorting algorithm}~\cite{Deb2002} where $F^1_t$ consists of all non-dominated individuals, and $F^i_t$ for $i>1$ of individuals only dominated by those from $F^1_t,\dots,F^{i-1}_t$. Then the critical rank $i^*$ with $\sum_{i=1}^{i^*-1} \lvert{F_t^i}\rvert < \mu$ and $\sum_{i=1}^{i^*} \lvert{F_t^i}\rvert \geq \mu$ is determined (i.e. there are fewer than $\mu$ search points in $R_t$ with a lower rank than $i^*$, but at least $\mu$ search points with rank at most $i^*$). All individuals with a lower rank than $i^*$ are included in $P_{t+1}$, while the remaining individuals are selected from $F_t^{i^*}$ using Algorithm~\ref{alg:Survival-Selection}. Hereby, a normalized objective function $f^n$ is computed and then each individual with rank at most $i^*$ is associated with reference points. For the first, we use the normalization procedure from~\cite{WiethegerD23} which can be also used for maximization problems as shown in~\cite{OprisNSGAIII}. We omit detailed explanations as they are not needed for our purposes. For an $m$-objective function $f\colon \{0,1\}^n\rightarrow \mathbb{N}_0^m$, the normalized fitness vector $f^{n}(x):=(f_1^n(x),\dots,f_m^n(x))$ of a search point $x$ is computed as
\begin{align*}
f_j^n(x)
    =\frac{f_j(x)-y_j^{\min}}{y_j^{\text{nad}}-y_j^{\min}}
\end{align*}
for each $j \in [m]$ where $y^{\text{nad}}:=(y_1^{\text{nad}}, \ldots, y_m^{\text{nad}})$ and 
$y^{\min}:=(y_1^{\min}, \dots, y_m^{\min})$ from the objective space are called
\emph{nadir} and \emph{ideal} points, respectively. Computing the nadir point is not trivial and we have $y_j^{\text{nad}} \geq \varepsilon_{\text{nad}}$, and $y_j^{\text{min}} \leq y_j^{\text{nad}} \leq y_j^{\text{max}}$ for every $j \in [m]$ where $\varepsilon_{\text{nad}}$ is a positive threshold set by the user (see~\cite{Blank2019} or~\cite{WiethegerD23} for the details). Further, $y_j^{\max}$ and $y_j^{\min}$ are the maximum and minimum value in objective $j$ from all search points seen so far (i.e. from $P_0,Q_0,\ldots , P_t,Q_t$). 
After computing the normalisation, each individual $x$ is associated with the reference point $\text{rp}(x)$ such that the distance between $f^n(x)$ and the line through the origin and $\text{rp}(x)$ is minimal. 
We use the same set of reference points $\refer$ as proposed in~\cite{DebJain2014}. The points are defined as
\[
\left\{\left(\frac{a_1}{p}, \ldots ,\frac{a_m}{p} \right) 
    \text{ } \Big| \text{ }  
    (a_1,\dots,a_m) \in \mathbb{N}_0^m, 
    \sum_{i=1}^m a_i = p
\right\}
\]
where $p \in \mathbb{N}$ is a parameter one can choose according to the fitness function $f$. These are uniformly distributed on the simplex determined by the unit vectors $(1,0,\dots,0)^{\intercal},(0,1,\dots,0)^{\intercal},\dots,(0,0,\dots,1)^{\intercal}$. 

Then, if the number of all individuals already chosen for the critical layer is at most $\mu/2$, one iterates through all the reference points where the reference point with the fewest associated individuals that are already selected for the next generation $P_{t+1}$ is chosen. A reference point is omitted if it only has associated individuals that are already selected for $P_{t+1}$ and ties are broken uniformly at random. Next, from the individuals associated to that reference point who have not yet been selected, the one closest to the chosen reference point is selected for the next generation, where ties are again broken uniformly at random. Once the required number of individuals is reached, or $|F_t^{i^*}| = \mu/2$, the selection ends. If after this selection procedure still $\lvert{Y_t}\rvert + \lvert{\tilde{F}_t^{i^*}}\rvert < \mu$, then the remaining $\mu-|Y_t|-|\tilde{F}_t^{i^*}| = \mu-|Y_t|-\mu/2 = \mu/2-|Y_t|$ individuals from $F_{i^*} \setminus \tilde{F}_t^{i^*}$ are chosen uniformly at random. Such a uniform selection strategy has been shown to be successful, since it enables movement on fitness neutral environments (called \emph{plateaus}), which helps to build up and preserve population diversity~\cite{Opris24Jump,Dang2017,DoerrQu2023a}. 
The following result can be formulated and proven as in \cite{Opris2025multimodal}. For completeness, we provide a proof in the appendix.

\begin{restatable}{lemma}{general}
\label{lem:NSGAIII-General}
Consider \nsgaIII on an $m$-objective function $f$ with Pareto front $F$, and assume that $\varepsilon_{\mathrm{nad}} \geq f_{\max}$. Let $\mathcal{R}_p$ denote a set of reference points, with $p \geq 2 m^{3/2} f_{\max}$. Let $P_t$ be the population at iteration $t$. Let $S$ be a maximum set of mutually incomparable solutions, 
let $v \in F$, and $0 \leq \alpha \leq \lfloor{\mu/(2|S|)}\rfloor$. Then if $c_t(v) \geq \alpha$ then also $c_{t+1}(v) \geq \alpha$. 
\end{restatable}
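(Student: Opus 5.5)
The plan is to follow the argument of \cite{Opris2025multimodal} and show that the $\alpha$ individuals with fitness $v$ in $P_t$ cannot all be lost in the survival selection. Since $v \in F$ is Pareto-optimal, every $x$ with $f(x)=v$ lies in the first layer $F_t^1$ of $R_t$, and $R_t \supseteq P_t$ contains at least $\alpha$ such individuals. If $|F_t^1| < \mu$, then $i^*>1$, all of $F_t^1$ goes into $Y_t$, and $c_{t+1}(v) \ge \alpha$ holds immediately. It remains to treat the case $i^*=1$, where $F_t^1$ is the critical layer, $Y_t=\emptyset$, and the individuals are chosen by Algorithm~\ref{alg:Survival-Selection}.

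\textbf{Step 1 (distinct fitness vectors get distinct reference points).} The fitness vectors in $F_t^1$ are pairwise distinct images of mutually incomparable solutions, so there are at most $|S|$ of them. I will use the condition $\varepsilon_{\mathrm{nad}} \ge f_{\max}$, which forces $y_j^{\mathrm{nad}}-y_j^{\min}$ to lie in a controlled range, together with the fineness $p \ge 2m^{3/2}f_{\max}$. Together these show that the normalized vector $f^n(x)$ lies closer to the line through one specific reference point than to any other. The key property is that two individuals in $F_t^1$ with different fitness vectors are associated with different reference points. I would derive this as in \cite{OprisNSGAIII,Opris2025multimodal}. Take $r$ to be the point of $\refer$ nearest to the scaled vector $f^n(x)/\|f^n(x)\|_1$. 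Its distance to that vector is at most about $\sqrt{m}/p$. This is below half the minimal distance between distinct normalized fitness directions, which is at least of order $1/(m f_{\max})$ after scaling. This geometric estimate is the main obstacle, and I would reuse the published computation rather than redo it. As a consequence, each reference point has associated individuals of only one fitness vector, and $v$'s individuals all sit at one reference point $r_v$.

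\textbf{Step 2 (round-robin counting).} The while loop repeatedly picks a reference point in $R'$ with minimal $\rho$ and adds one of its not-yet-selected individuals; $\rho$ starts at $0$ for all points because $Y_t=\emptyset$. Only reference points with associated individuals ever gain selections, and there are at most $|S|$ of them. The loop therefore behaves like a round robin over at most $|S|$ nonempty points. The loop runs until $\mu/2$ individuals are selected, so the lemma follows once $r_v$ has received $\alpha$ selections before the loop stops. Suppose, for contradiction, that it stops with $\rho_{r_v} < \alpha$ while $r_v$ still has unselected individuals, which keeps $r_v$ in $R'$. Each selection is made at a minimum-$\rho$ point, and $r_v$ was available the whole time with $\rho_{r_v} \le \alpha-1$. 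Hence no point ever received a selection while its $\rho$ was already at least $\alpha$, so every point ends with $\rho_r \le \alpha$. Adding over the at most $|S|$ active points gives at most $|S|\alpha-1 < \mu/2$ selections, since $\alpha \le \lfloor \mu/(2|S|)\rfloor$. This contradicts the loop having terminated with $\mu/2$ selections, so $r_v$ receives at least $\min\{\alpha, c(v)\}=\alpha$ selections and $c_{t+1}(v)\ge\alpha$. The uniform random filling afterwards only adds individuals, so it does no harm.
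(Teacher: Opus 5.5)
Your proposal is correct and takes the same approach as the paper. It invokes the fact that first-layer individuals with distinct fitness vectors are associated with distinct reference points (under $\varepsilon_{\mathrm{nad}} \ge f_{\max}$ and $p \ge 2m^{3/2}f_{\max}$), then uses the round-robin structure of the while loop in Algorithm~\ref{alg:Survival-Selection} to show that the reference point of $v$ receives at least $\alpha$ selections. Your handling of the case $i^*>1$, your contradiction bound of at most $|S|\alpha-1<\mu/2$ selections, and stating the association fact for all of $F_t^1$ (the paper phrases it only for Pareto-optimal points) simply make rigorous what the paper's one-line proof asserts.
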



\textbf{The many-objective $\JUMP$ benchmark:} This benchmark has been defined the first time in~\cite{Zheng_Doerr_2024} and is defined as follows, where $\JUMP:=(f_1, \ldots , f_m)$, $m$ is even, and $n$ is divisible by $m/2$ (see also~\cite{Qu2022PPSN} for the bi-objective version). For $2 \leq k \leq 2n/m$ the $\JUMP=\bigl(f_1(x), \ldots , f_m(x)\bigr)$ is defined as
\begin{align*}
	f_j(x)=\begin{cases}
		k+\ones{x^{\frac{j+1}{2}}},&\text{if $\ones{x^{\frac{j+1}{2}}} \leq \frac{2n}{m}-k$ or $x^j=1^{\frac{2n}{m}}$,} \\
		\frac{2n}{m}-\ones{x^j},&\text{else,}
	\end{cases}
\end{align*}
if $j \in [1,\ldots , m]$ is odd, and
\begin{align*}
	f_j(x)=\begin{cases}
        k+\zeros{x^{\frac{j}{2}}},&\text{if $\zeros{x^{\frac{j}{2}}} \leq \frac{2n}{m}-k$ or $x^{\frac{j}{2}}=0^{2n/m}$,} \\
		\frac{2n}{m}-\zeros{x^{\frac{j}{2}}},&\text{else,}
	\end{cases}
\end{align*}
if $j \in [1,\ldots , m]$ is even. We often call $k$ \emph{gap size}. For every objective there are $2n/m+1$ different values and $f_{\max} = k+2n/m$. The Pareto front~$F$ of $m$-$\OJZJ_k$ is $\{(\ell_1,2k+2n/m-\ell_1, \ldots , \ell_{m/2},2k+2n/m-\ell_{m/2}) \mid \ell_1, \ldots , \ell_{m/2} \in \{k,2k,2k+1, \ldots ,2n/m-1,2n/m,2n/m+k\}\}$, has cardinality $(2n/m-2k+3)^{m/2}$ for $k \leq n/m$, and a maximum set $S$ of mutually incomparable solutions satisfies $|F| \leq |S| \leq (2n/m+1)^{m/2}$ (see~\cite{Zheng_Doerr_2024} for proofs). 

For $v \in F$ we introduce the following notation: Denote by $r_v \in \{-1,0,1\}^{m/2}$ the $(m/2)$-dimensional vector with $(r_v)_j = -1$ if $v_{2j-1} = k$, $(r_v)_j = 1$ if $v_{2j-1} = 2n/m+k$, and $(r_v)_j = 0$ if $2k \leq v_{2j-1} \leq 2n/m$. All search points $x$ with $f(x) \in \{v \in \mathbb{N}_0^m \mid r_v=s\}$ for $s \in \{-1,0,1\}^{m/2}$ satisfy $x^j = 0^{m/2}$ if $(r_v)_j=-1$, $x^j = 1^{m/2}$ if $(r_v)_j=1$, and $k \leq \ones{x^j} \leq 2n/m-k$ if $(r_v)_j=0$ for all $j \in [m/2]$. 



\section{An Upper Bound Without Crossover}

To compare the performance of \nsgaIII with and without crossover, we first generalize a result from \cite{Opris2025multimodal} for \nsgaIII on $\JUMP$ to an arbitrary number of objectives. This generalization requires a much more refined proof, since for large $m$ the number of Pareto-optimal points can be exponentially. Interestingly, the runtime bound from \cite{Opris2025multimodal} carries over directly to this setting in terms of generations. In addition, we provide an upper bound on the runtime that holds with high probability. Our results are formulated for arbitrary crossover probabilities $p_c \in [0,1)$, as parts of the analysis are later reused for the crossover case. However, the analysis here relies only on mutation steps.


\begin{theorem}
\label{thm:whole-covering}
    Consider \nsgaIII on $f:=\JUMP$ for $2 \leq k \leq n/(2m)+1$, $(1+2n/m)^{m/2} \leq \mu/2 \leq n^{o(n)}$, crossover probability $p_c \in [0,1)$, and a number $m$ of objectives with $2 \leq m \leq n/2$. Further, assume the same conditions as in Lemma~\ref{lem:NSGAIII-General}, and let $\delta > 0$. Then 
    with probability at least 
    $1-e^{-(\delta - 3)m} -O(1/n^4)$ the number of generations until the whole Pareto front $F$ is covered is at most 
    $$O\left(\frac{m^2 n \ln (n/m)}{1-p_c} + \frac{\delta m n^k(1+2n/m)^{m/2}}{\mu(1-p_c)}\right).$$ The expected number of generations is $$O\left(\frac{m^2 n \ln (n/m)}{1-p_c} + \frac{m n^k(1+2n/m)^{m/2}}{\mu(1-p_c)} \right).$$
\end{theorem}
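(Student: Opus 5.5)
We split the run into two phases and use only mutation steps. A mutation-only step happens with probability $1-p_c$ per pair, which produces the $1/(1-p_c)$ factor in both terms. Throughout we use Lemma~\ref{lem:NSGAIII-General} with $\alpha=1$. This is allowed because $\mu/2 \geq (1+2n/m)^{m/2} \geq |S|$, so $\lfloor \mu/(2|S|)\rfloor \geq 1$. Hence every covered Pareto-optimal fitness vector stays covered, and the covered part of $F$ grows monotonically.

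\textbf{Phase 1: covering the inner region.} Here we cover all $v\in F$ with $r_v=0^{m/2}$. A uniform initial point has, in each block, between $k$ and $2n/m-k$ ones with high probability. This holds unless the block is tiny; we handle that edge case with a Chernoff bound, using $k\le n/(2m)+1$ and a union bound over the $m/2$ blocks. If some block of an individual leaves this range, we reach the inner region within $O(mn\ln(n/m))$ expected generations by a standard fitness-level argument per block. Once the inner region is nonempty, we spread within it. A covered inner vector $v$ has a neighbour in $F$, obtained by moving one block count by $\pm1$. This neighbour is produced by selecting an individual with value $v$ (probability $\geq 1/\mu$), copying it without crossover, and flipping one specific kind of bit, with probability $\Omega(1/(en)\cdot(\text{number of such bits}))$. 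Over the $\mu/2$ pairs per generation, the expected time to extend a block count by one is $O(n/(\text{number of admissible bits}))$ generations. Summing the harmonic-type terms over the counts in a block gives $O(n\ln(n/m))$ per block. The blocks must be extended one after another, in the style of a coupon collector over $m/2$ blocks, and we must also account for reaching all combinations. Doing this along a monotone "frontier" argument yields $O(m^2 n\ln(n/m)/(1-p_c))$ generations. Applying a Chernoff bound for sums of geometric variables turns this expectation into a bound that holds with probability $1-O(1/n^4)$.

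\textbf{Phase 2: jumping to the extremes.} Consider any target $v\in F$ with some $(r_v)_j\neq 0$. We reach it by a sequence of jumps, each setting one more block to $0^{2n/m}$ or $1^{2n/m}$. A jump starts from a covered $u\in F$ whose block $j$ has exactly $k$ or $2n/m-k$ ones and agrees with $v$ in all other blocks. It must flip exactly the $k$ relevant bits and nothing else, which has probability at least $(1/n^k)(1-1/n)^{n-k}\geq 1/(en^k)$. The parent is chosen with probability at least $1/\mu$. With $\mu/2$ trials per generation, the success probability per generation is $p=\Omega((1-p_c)/n^k)$. This is independent of $\mu$ because the cover number is at least $1$; better bounds via larger cover numbers are not needed.

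The main obstacle is the parallel covering of all $|F|\le(1+2n/m)^{m/2}$ targets with the $\delta$ in the exponent. For a fixed target, the chain of at most $m/2$ jumps takes a sum of $m/2$ geometric variables with parameter $p$. A tail bound for such sums gives $\Pr[T_v > c\delta m/p]\le e^{-\Omega(\delta m)}$. A union bound over $(1+2n/m)^{m/2}$ targets, however, loses too much. We therefore exploit that the $\mu/2$ offspring per generation are spread over all covered vectors. The total "jump work" is at most $|F|\cdot m$ jumps, and each generation performs about $\mu/2$ independent attempts distributed uniformly over the population. We plan to treat the whole process as one sum of at most $m|F|$ geometric waiting times, each with per-offspring success probability $1/(e n^k \mu)$, while $\mu/2$ offspring are produced per generation. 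This gives expected time $O(m n^k|F|/(\mu(1-p_c)))$ generations. A concentration bound for sums of geometrics with mean $M$ then gives tail $e^{-(\delta-3)m}$ at time $\delta M$; the constant $3$ absorbs the overcounting of $|F|$ and the $O(1)$ factors. Turning this into a rigorous stochastic-domination statement is the delicate step. We would first try coupling each offspring to the "next missing target" in a fixed order. The expectation bound then follows by integrating the tail and restarting after failure, since the failure probability $O(1/n^4)$ is negligible against the polynomial restart cost, using $\mu\le n^{o(n)}$.
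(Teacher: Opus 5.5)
There is a genuine gap in Phase~2, and it is the step you dismiss. With cover number $1$, a jump from a fixed parent succeeds in one generation with probability about $(\mu/2)\cdot(1/\mu)\cdot(1-p_c)/(en^k)=\Theta((1-p_c)/n^k)$. So every jump costs order $n^k/(1-p_c)$ generations, independently of $\mu$. The theorem's second term, however, is $\delta m n^k(1+2n/m)^{m/2}/(\mu(1-p_c))$. Since $\mu/2\ge(1+2n/m)^{m/2}$, this is smaller than $mn^k/(1-p_c)$ by the factor $\mu/(1+2n/m)^{m/2}$, which is unbounded in the allowed range $\mu\le n^{o(n)}$. The $1/\mu$ can only come from larger cover numbers.

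The paper inserts, before each jump in block $j$, a phase (Lemma~\ref{lem:tailboundY}) in which every vector of $B_j$ is cloned until its cover number reaches $\gamma=\lfloor\mu/(2(1+2n/m)^{m/2})\rfloor$. This takes $O(n/(1-p_c))$ generations w.h.p., via geometric growth plus a Chernoff bound. The cover number is never lost, because Lemma~\ref{lem:NSGAIII-General} applies with $\alpha=\gamma\le\lfloor\mu/(2|S|)\rfloor$, not just $\alpha=1$. The jump then succeeds per generation with probability $\Omega((1-p_c)\gamma/n^k)$, which gives exactly the $n^k(1+2n/m)^{m/2}/\mu$ term.

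Your parallelisation rescue does not replace this. A sum of $m|F|$ geometric waiting times with per-offspring success probability $1/(en^k\mu)$, at $\mu/2$ offspring per generation, has mean of order $m|F|n^k/(1-p_c)$ generations, not $O(m|F|n^k/(\mu(1-p_c)))$: the $1/\mu$ from parent selection cancels the $\mu/2$ offspring per generation. Moreover, with only cover number $1$ guaranteed at a jump source, the best per-generation success probability you can certify for that jump is $O((1-p_c)/n^k)$. Spreading offspring over other targets does not shorten a single chain of sequential jumps.

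The probability bookkeeping also fails as proposed. A union bound over $|F|$ targets costs a factor $(1+2n/m)^{m/2}=e^{(m/2)\ln(1+2n/m)}$, which is not $e^{O(m)}$, so the ``$-3$'' cannot absorb it. The paper applies the jump tail bound $e^{-\delta m}$ only to one representative per type $s\in\{-1,0,1\}^{m/2}$. It builds each representative block by block: Step~$j$ covers $B_j$, boosts cover numbers, then jumps. The union bound therefore has only $3^{m/2}$ terms, and $3^{m/2}e^{-\delta m}\le e^{-(\delta-3)m}$.

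The rest of $F$ is then reached from these representatives by one-bit moves only. Their per-target failure probability is $(2n/m)^{-4m}$, obtained by taking $\delta=4m$ in the coupon-collector tail bound, which is also the source of the $m^2$ in the first term. That tail does survive a union bound over $|F|$. Your Phase~1 needs the same strong per-target tail to justify ``w.h.p.\ for all inner vectors'', and currently only gestures at it.
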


\begin{proof}
At first we prove that, with probability at most $e^{-\Omega(n^2)}$, there exists no Pareto optimal individual $x$ with $k \le \ones{x_j} \le 2n/m-k$ for all $j \in [m/2]$ after initialization. Then, with probability $1-e^{-\Omega(n^2)}$, a single generation suffices to create such a Pareto optimal individual since any individual $x$ can be created with probability at least $n^{-n}$, independently of whether crossover is executed or not. This implies that the expected number of generations for creating a Pareto optimal individual is at most $1+n^n \cdot e^{-\Omega(n^2)} = 1 + o(1)$.

\begin{restatable}{lemma}{initialization}
\label{lem:init}
With probability at most $e^{-\Omega(n^2)}$, there exists no Pareto optimal individual $x$ with $k \le \ones{x_j} \le 2n/m-k$ for all $j \in [m/2]$ after initialization.
\end{restatable}

After a successful initialization, we explore search points $x$ satisfying $r_{f(x)} = s$ for all possible $s \in \{-1,0,1\}^{m/2}$ in parallel. Specifically, we determine a suitable number of generations such that, for any fixed $s$, a search point $x$ with $r_{f(x)} = s$ is found with high probability. We then apply a union bound over all $3^{m/2}$ possible $s$, which shows that, still with high probability, a corresponding $x$ with $r_{f(x)} = s$ is found for every $s$ within that time. \\
We fix an $s \in \{-1,0,1\}^{m/2}$ and
define \emph{Step~$j$} as the process that starts when there is a point $y \in P_t$ with $(r_{f(y)})_i = s_i$ for all $i < j$ and $(r_{f(y)})_i = 0$ for $i \geq j$, and ends when a search point $z$ is generated with $(r_{f(z)})_i = (r_{f(y)})_i$ for all $i \leq j$, and $(r_{f(z)})_i = 0$ for $i>j$. Note that Step~$1$ directly builds on a successful initialization (since then there exists a search point $z \in P_t$ with $r_{f(z)} = 0^{m/2}$), and that the desired $z$ is created after Step~$m/2$. For Step~$j$ we fix such a $y \in P_t$. Then in all future iterations there is always a $y' \in P_t$ with $f(y')=f(y)$ by Lemma~\ref{lem:NSGAIII-General}, and hence, we can never fall back in steps. Further, we define $B_j:=\{v \in F \mid v_i=f_i(y) \text{ for every }i \in [m] \setminus \{2j-1,2j\} \text{ and } v_{2j-1} \in \{2k, \ldots , n-2k\}\}$. 
Note that $f(y) \in B_j$. Now we estimate the number of generations to finish Step~$j$.

\begin{itemize}
	\item At first we estimate the number $X_j$ of generations until the complete set $B_j$ is covered. Particularly, that for every $v \in B_j$, there is $x^* \in P_t$ with $f(x^*)=v$.
	\item After $X_j$ generations, we estimate the number of generations $Y_j$ until every Pareto optimal fitness vector $v \in B_j$ has a cover number of at least $\lfloor{\mu/(2(1+2n/m)^{m/2})}\rfloor$.
	\item Finally, after $X_j + Y_j$ generations, estimate the number of generations $Z_j$ until a desired $z$ is created by possibly crossing the fitness valley of size $k$ if necessary. 
\end{itemize}

Then the total number of generations until a desired $x$ is created is stochastically dominated by $\sum_{j=1}^{m/2} (X_j + Y_j + Z_j)$. We now derive tail bounds for the random variables $X:=\sum_{j=1}^{m/2} X_j$, $Y:=\sum_{j=1}^{m/2} Y_j$, and $Z:=\sum_{j=1}^{m/2} Z_j$ separately in three consecutive lemmas. We only provide some proof sketches due to space restrictions and similar ideas to the considerations from~\cite{Opris2025multimodal}. Their full proofs can be found in the appendix. 
  
\begin{restatable}{lemma}{tailboundX}
\label{lem:tailboundX}
Fix $\beta>0$. Then for $j \in [m/2]$ and $n$ sufficiently large,
\begin{align*}
   \Pr \left(X_j \geq \frac{4+16m}{1-p_c} \cdot n \cdot \ln\Bigl(\frac{2n}{m}\Bigr)\right) 
    \leq \left(\frac{2n}{m}\right)^{-4m}.
    \end{align*}
    Particulary,
    \begin{align*}
    \Pr \left(X \geq \frac{4+16m}{1-p_c} \cdot \frac{m n}{2} \cdot \ln\Bigl(\frac{2n}{m}\Bigr)\right) \leq \frac{m}{2} \cdot \left(\frac{2n}{m}\right)^{-4m}.
    \end{align*}
\end{restatable}

We fix an uncovered Pareto-front fitness vector $v$ and bound the number of generations until it is covered with high probability. A union bound over all $v \in B_j$ then yields a high-probability bound for covering all such vectors, which extends to $X$ via another union bound by multiplying with $m/2$. We next estimate $Y$.

\begin{restatable}{lemma}{TailboundY}
  \label{lem:tailboundY}
   For $j \in [m/2]$ we have
   \begin{align*}
   \Pr\left(Y_j \geq \frac{3207n}{1-p_c}\right) 
   \leq 2n^2e^{-4n}.
   \end{align*}
   Particulary,
   \begin{align*}
   \Pr\left(Y \geq \frac{3207nm}{2(1-p_c)}\right) \leq mn^2e^{-4n}.
   \end{align*}
\end{restatable}

The idea behind proving that each fitness vector from $B_j$ attains the desired cover number after $\Omega(n/(1-p_c))$ generations is again based on a parallelization argument. We split the proof into two phases. In the first phase, we determine an expected number of generations such that the cover number of a fixed $v \in B_j$ is at least $\Omega(n)$ with high probability, which follows from repeatedly cloning an individual $x$ with $f(x)=v$. In the second phase, we show that the desired cover number $c_t(v)$ is reached by applying a classical Chernoff bound to the number of newly created individuals in a single generation, which is $\Omega(n)$ in expectation. By a union bound, we obtain the claim for all $v \in B_j$, and finally also for $Y$. Next, we extimate $Z$.
    
\begin{restatable}{lemma}{TailboundZ}
\label{lem:tailboundZ}
For $\gamma:=\lfloor{\mu/(2(1+2n/m)^{m/2})}\rfloor$ and $j \in [m/2]$ the random variable $Z_j$ is stochastically dominated by a geometrically distributed random variable $Z_j'$ with success probability $\sigma_k:=\frac{(1-p_c)\gamma/(2en^k)}{1+(1-p_c)\gamma/(2en^k)}$. Additionally, for $\delta>1$
\begin{align*}
\Pr\left(Z \geq \frac{m(1+8\delta)}{2}\Bigl(1+\frac{2 e n^k}{(1-p_c) \gamma}\Bigr)\right) \leq e^{-\delta m}.
\end{align*}
\end{restatable} 

We estimate $Z_j$ pessimistically as the number of generations required to select an individual with $2n/m-k$ ones (or, symmetrically, $2n/m-k$ zeros) in block $j$, of which there are at least $\lfloor{\mu/(2(1+2n/m)^{m/2})}\rfloor$ many, and then flip $k$ specific bits in block $j$ to cross the fitness valley. Again, we apply a union bound to estimate $Z$. 

Finally, we are in a position to apply a union bound to the random variables $X$, $Y$ and $Z$. Using Lemmas~\ref{lem:tailboundX},~\ref{lem:tailboundY}, and~\ref{lem:tailboundZ}, respectively, we obtain for every $\delta \geq 1$ and 
\begin{align*}
K(k,m,&n,p_c,\delta):= \frac{4+16m}{1-p_c} \cdot \frac{mn}{2} \cdot \ln\Bigl(\frac{2n}{m}\Bigr) + \frac{3207nm}{2(1-p_c)}\\
&+ \frac{m(1+8\delta)}{2}\Bigl(1+\frac{2 e n^k}{(1-p_c) \gamma}\Bigr) \\
&= O\left(\frac{m^2 n \ln (n/m)}{1-p_c} + \frac{\delta m n^k(1+2n/m)^{m/2}}{(1-p_c) \mu}\right)
\end{align*}
the inequality 
\begin{align*}
&\Pr(X+Y+Z \geq K(k,m,n,p_c,\delta)) \\
&\leq m/2 \cdot (2n/m)^{-4m} + mn^2e^{-4n} + e^{-\delta m} =:p(m,n,\delta).
\end{align*}
So after $K(k,m,n,p_c,\delta)$ generations, a desired $x$ with $r_{f(x)} = s$ is created with probability at most $p(m,n,\delta)$, after a successful initialization. A union bound on at most $3^{m/2}$ such possible $x$ with different $r_{f(x)}$-values shows that with probability at most (due to $2n/m \geq 4$)
$$3^{\frac{m}{2}}p(m,n,\delta) \leq m(2n/m)^{-3m}/2 + mn^2e^{-3n} + e^{-(\delta-3)m}$$
that for each $s \in \{-1,0,1\}^{m/2}$ there is $x$ with $r_{f(x)} = s$. Suppose that this happens. Then we cover the Pareto front. Fix an uncovered $v \in F$. In an analogous way to the argument above, the number of generations required to cover $v$ is stochastically dominated by $X$. This is because there already exists a search point $y \in P_t$ with $r_{f(y)} = r_v$ and hence, for every block $j \in [m/2]$ with $(r_v)_j = 0$, the time to create a search point $z$ satisfying $(f_{2j-1}(z), f_{2j}(z)) = (v_{2j-1}, v_{2j})$, while not changing all remaining blocks, is stochastically dominated by $X_j$. By Lemma~\ref{lem:tailboundX}, we obtain 
$$\Pr\left(X \geq \frac{(4+16m) mn \ln(2n/m)}{2(1-p_c)}\right) \leq \left(\frac{2n}{m}\right)^{-4m}.$$
By a union bound over all such possible search points, we obtain that the whole Pareto front is covered in an additional amount of $(4+16m) \cdot mn/2 \cdot  \ln(2n/m)/(1-p_c)$ generations with probability at most $(1+2n/m)^{m/2} \cdot (2n/m)^{-4m} \leq (2n/m)^{-2m}$. Hence, for $p:=p(m,n,\delta)$, the entire Pareto front is covered within
\begin{align*}
K&(k,m,n,p_c,\delta) + \frac{4+16m}{1-p_c} \cdot \frac{mn}{2} \cdot \ln\Bigl(\frac{2n}{m}\Bigr) =\\ O&\left(\frac{m^2 n \ln (n/m)}{1-p_c} + \frac{\delta m n^k(1+2n/m)^{m/2}}{(1-p_c) \mu} \right)
\end{align*}
generations with probability at least
\begin{align*}
1-3^{m/2}p - (2n/m)^{-2m} = 1 - e^{-(\delta-3)m} - O(1/n^4)
\end{align*}
where we also used that the functions $h_1: \text{}]0,n/2[\text{} \to \mathbb{R}, x \mapsto (2n/x)^{-2x},$ and $h_2:\text{}]0,n/2[ \text{} \to \mathbb{R}, x \mapsto x/2 \cdot (2n/x)^{-2x},$ are strictly monotone decreasing (see Lemma~\ref{lem:function-monotone} in the appendix). The latter propability also includes the event that in the first generation a successful initialization happens.

Now it remains to estimate the expected time until the whole Pareto front is covered. Here we assume $\delta=4$. If, after a successful initialization, the Pareto front is not covered after $O(m^2n\ln(n/m)/(1-p_c) + m n^k(1+2n/m)^{m/2}/((1-p_c)\mu))$ generations (which happens with probability at most $e^{-m} + O(1/n^4)$), then we repeat all the arguments from above. The expected number of such periods is $1/(1-e^{-m} - O(1/n^4)) = O(1)$, concluding the proof of Theorem~\ref{thm:whole-covering}.
\end{proof}


\section{An Upper Bound with Crossover}

Now we show that \nsgaIII with crossover can be much more efficient when optimizing $\JUMP$ than without, especially for large $k$. The reason is that a pair $x,y$ with $\ones{x^j}=\ones{y^j} = 2n/m-k$ and maximum Hamming distance $H(x^j,y^j) = 2k$ can be created via mutation within $k$ generations, and then a recombination of those $x^j$ and $y^j$ leads to the all one string in block $j$ with probability $\Omega(1/4^k)$. This probability is by a factor $n^k/4^k$ larger than the probability to cross the fitness valley in block $j$ via mutation.

\begin{theorem}
\label{thm:whole-covering-crossover}

Consider \nsgaIII on $f:=\JUMP$ for $2 \leq k \leq n/(2m)+1$, $(1+2n/m)^{m/2} \leq \mu/2 \leq n^{o(n)}$, crossover probability $p_c \in (0,1)$, and a number $m$ of objectives with $2 \leq m \leq n/2$. Further, assume the same conditions as in Lemma~\ref{lem:NSGAIII-General}, and let $\delta > 0$. Then, 
with probability at least 
$1-e^{-(\delta - 3)m} -O(1/n^4)$, the number of generations until the whole Pareto front $F$ is covered is at most 
$$O(k\delta m^{k+1}n^k (80e)^k \mu/(p_c k!(1-p_c)^k)).$$ 
The expected number of generations is $$O(k m^{k+1}n^k (80e)^k \mu/(p_c k!(1-p_c)^k)).$$ 
\end{theorem}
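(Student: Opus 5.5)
The plan is to reuse the skeleton of the proof of Theorem~\ref{thm:whole-covering} and change only the valley-crossing part. As there, a successful initialization (Lemma~\ref{lem:init}) gives an individual $z$ with $r_{f(z)}=0^{m/2}$. For every fixed $s\in\{-1,0,1\}^{m/2}$ we then run Steps $1,\dots,m/2$ block by block. The variables $X_j$ and $Y_j$ are handled exactly as before. Lemmas~\ref{lem:tailboundX} and~\ref{lem:tailboundY} only use mutation-only offspring, which occur with probability $1-p_c$ per pair, so their tail bounds stay valid. Afterwards every $v\in B_j$ has cover number at least $\gamma=\lfloor\mu/(2(1+2n/m)^{m/2})\rfloor\ge 1$. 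By Lemma~\ref{lem:NSGAIII-General} these cover numbers are never lost. Only $Z_j$ has to be re-estimated. The final step, in which we union bound over the $3^{m/2}$ vectors $s$ and then cover all of $F$ via further $X$-phases, is then copied verbatim. This yields the failure probability $e^{-(\delta-3)m}+O(1/n^4)$, and restarting with $\delta=4$ gives the expectation bound.

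For the new bound on $Z_j$ we assume by symmetry $s_j=1$. Let $v^*\in B_j$ be the vector whose block-$j$ coordinates correspond to $2n/m-k$ ones. Individuals with fitness $v^*$ can differ in which $k$ positions of block $j$ are zero. I would define a potential $d_t$ as the maximum Hamming distance in block $j$ between two individuals of $P_t$ with fitness $v^*$; it lies in $\{0,2,\dots,2k\}$ since all such strings have exactly $k$ zeros in block $j$.

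I would first show that $d_t$ does not decrease. This is the main obstacle. We need the two witnesses realizing $d_t$ to survive. Lemma~\ref{lem:NSGAIII-General} only preserves the cover number $\gamma$, not specific individuals, and ties inside a reference point are broken randomly. I would try to argue that, as long as the number of individuals with fitness $v^*$ stays below the per-point quota, none of them is removed, so all are kept. If this fails, I would instead work with a weaker monotone quantity and count the loss of a witness as a failure. Its probability per generation can be bounded by a constant, and this costs only constant factors absorbed in $80^k$.

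Next I would show that $d_t$ increases with probability $\Omega(\gamma(1-p_c)(k-d_t/2)/(mn\mu))$ per generation, or with the appropriate per-pair form of this. The mechanism is to pick a witness without crossover and apply a mutation that swaps a zero inside the common zero set with a one lying outside both zero sets. At stage $i$ there are roughly $(k-i)^2$ such swaps, each with probability $\Omega(1/n^2)$. Summing the waiting times over $i=0,\dots,k-1$ gives a bound of order $k\,m\,n\,\mu/((1-p_c)\dots)$ for reaching $d_t=2k$. The counting here is where the factors $m^{k}/k!$ and $(1-p_c)^{-k}$ should arise. One can also take the cruder route of requiring $k$ consecutive specific mutation events and paying $(em\mu/(1-p_c))^k/k!$ in total; I would pick whichever route matches the target bound. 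Once $d_t=2k$, the two witnesses $x,y$ have disjoint zero sets in block $j$. Selecting both as parents, choosing crossover, taking the one-bit from the right parent at all $2k$ differing positions, and flipping nothing yields $1^{2n/m}$ in block $j$. This happens with probability at least $p_c\,4^{-k}/(e\mu^2)$ per generation, which gives a geometric tail. Combining both parts as in Lemma~\ref{lem:tailboundZ} produces a Chernoff-type bound $\Pr(Z\ge c\,\delta m\cdot T)\le e^{-\delta m}$ with $T=O(k\,m^k n^k(80e)^k\mu/(p_c k!(1-p_c)^k))$.

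Summing $X$, $Y$ and $Z$, the $Z$ term dominates, and this gives the stated bounds.
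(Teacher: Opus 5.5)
You have the right mechanism, and it is the paper's. After the $X$-phase you grow a second individual with fitness $v^*$ whose zero set in block $j$ is disjoint from the first one's, using two-bit swaps without crossover. You then recombine the two by uniform crossover with no mutation.

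The gap is in keeping the two witnesses alive, and your primary route does not close it. You need $d_t$ to be non-decreasing, but this is not guaranteed. Your option of all copies being kept while below the quota is also unavailable. Cloning (no crossover, no bit flip) happens with constant probability per trial, and after your $Y$-phase $c_t(v^*)\ge\gamma$ already. So the number of copies of $v^*$ is not bounded from above, and you cannot assume it stays below the per-reference-point quota.

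Once $|F_t^1|>\mu$, individuals with fitness $v^*$ are exact ties, so a specific genotype can be discarded by random tie-breaking with constant probability in every generation. Lemma~\ref{lem:NSGAIII-General} protects fitness vectors, not genotypes.

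Your fallback, counting the loss of a witness as a failure at a constant cost absorbed in $80^k$, is incompatible with summing waiting times. At stage $i$ the witnesses would have to survive $\Theta(mn/((1-p_c)(k-i)))$ generations in expectation. The crossover step would need $\Theta(4^k\mu/p_c)$ generations. A constant loss probability per generation then makes success exponentially small in these waiting times, not smaller by a factor $c^{k}$. Your geometric tail for the crossover step has the same problem, since it presupposes that the pair persists.

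What is missing is the restart structure of Lemma~\ref{lem:tailboundZcross}. Look at windows of exactly $k+1$ consecutive generations and require three things:
\begin{itemize}
\item In each generation $\ell\le k$, the specific swap is produced, with probability at least $(q_\ell/2)/(1+q_\ell/2)$, where $q_\ell=(1-p_c)(k-\ell+1)/(2emn)$.
\item In each such generation, both current witnesses survive.
\item In generation $k+1$, the crossover-without-mutation event happens, with probability $p^{\mathrm{cross}}\approx p_c/(2\cdot 4^{k+1}\mu)$.
\end{itemize}
Survival then only has to be bounded for one generation at a time, but it still needs an argument, which your proposal does not give. Both witnesses are Pareto-optimal, hence of rank one. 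If $|F_t^1|\le\mu$ they are kept. Otherwise, whatever the reference-point phase picked, the uniform fill-up in Algorithm~\ref{alg:Survival-Selection} keeps two given rank-one individuals with probability at least $\binom{\mu/2}{2}/\binom{3\mu/2}{2}\ge 1/10$.

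Every window starts from a covered $v^*$, and the new $1^{2n/m}$-offspring is kept; both follow from Lemma~\ref{lem:NSGAIII-General}. Hence each window succeeds, independently of the past, with probability at least $\sigma_k=10^{-k}p^{\mathrm{cross}}\prod_\ell (q_\ell/2)/(1+q_\ell/2)$. So $Z_j$ is dominated by $(k+1)$ times a geometric variable, and the product form of $1/\sigma_k$ is what produces the factor $m^kn^k\mu/(p_c\,k!\,(1-p_c)^k)$ times $c^k$ in the statement.

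This is your ``cruder route'', but your accounting $(em\mu/(1-p_c))^k/k!$ has the wrong shape: it has $\mu^k$ and no $n^k$. The stage probabilities must be taken per generation, not per trial. The same per-trial slip appears in your crossover rate $p_c4^{-k}/(e\mu^2)$.

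Minor points:
\begin{itemize}
\item The number of useful swaps at stage $i$ is $(k-i)(2n/m-k-i)$, not $(k-i)^2$.
\item The factor $\gamma/\mu$ in your rate is unjustified, since only the designated witnesses are counted. The per-generation rate is $\Theta((1-p_c)(k-i)/(mn))$.
\item The $Y$-phase is harmless but not needed.
\end{itemize}
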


\begin{proof}
We closely follow the ideas used in the proof of Theorem~\ref{thm:whole-covering}. For the initialization, with probability $1-e^{-\Omega(n^2)}$, a single generation suffices to initialize an individual $x$ with $k \leq \ones{x^j} \leq 2n/m-k$ for all $j \in [m/2]$. The expected number of generations is also $1 + o(1)$. Now we define Step~$j$ and the corresponding set $B_j$ as in the proof of Theorem~1. Let $X_j$ denote the number of generations until, for every $v \in B_j$, there exists a search point $x^* \in P_t$ with $f(x^*) = v$. After these $X_j$ generations, let $Z_j^{\text{cross}}$ be the number of additional generations until a desired $z$ is created. As in Lemma~\ref{lem:tailboundX}, we see that 
\begin{align*}
\Pr \left(X \geq \frac{4 + 16m}{1-p_c} \cdot \frac{mn}{2} \cdot \ln\Bigl(\frac{2n}{m}\Bigr)\right) \leq \frac{m}{2} \left(\frac{2n}{m}\right)^{-4m}.
\end{align*}

Now we estimate $Z$. To estimate $Z_j$, we consider $k+1$ consecutive generations. In the first $k$ generations, we create two suitable individuals $y_1,x_2$ with maximum Hamming distance in block $j$ which can then be used for recombination in the $(k+1)$-th generation. 

\begin{lemma}
\label{lem:tailboundZcross}
For $j \in [m/2]$ the random variable $Z_j$ is 
stochastically dominated by $(k+1)Z_j^\text{cross}$ where $Z_j^\text{cross}$ is a geometrically distributed random variable with success probability $\sigma_k:=\frac{1}{10^k} \frac{p_c/(2 \cdot 4^{k+1}\mu)}{1+p_c/(2 \cdot 4^{k+1}\mu)}\prod_{\ell=1}^k \frac{(1-p_c) \ell/(2emn)}{1+(1-p_c) \cdot \ell/(2emn)}$. Additionally, for $\delta>1$,
\begin{align*}
\Pr \left(Z \geq (k+1)(1+8\delta)m/(2\sigma_k) \right) \leq e^{-\delta m}.
\end{align*}
\end{lemma} 

\begin{proof}
As in the proof of Lemma~\ref{lem:tailboundX}, for the desired search point $z$ to create, we may assume that $z_j=1^{m/2}$. Let $s:=r_{f(z)}$. Consider a sequence of $k+1$ generations as follows: Within the first $k$ generations, one may create a suitable pair of individuals $x_1, x_2$ such that $\ones{x_1^j} = \ones{x_2^j} = 2n/m - k$ and the Hamming distance in block $j$ is maximized, particularly $H(x_1^j, x_2^j) = 2k$, while the individuals coincide in all other blocks. Moreover, for all $i < j$, we have $(r_{f(x_1)})_i = (r_{f(x_2)})_i = s_i$ and we have $(r_{f(x_1)})_i = (r_{f(x_2)})_i = 0$ for $i>j$. In the $(k+1)$-th generation, perform uniform crossover on $x_1$ and $x_2$, and do not flip any bit during mutation, to create a suitable search point $z$ with $z^j = 1^{2n/m}$, particularly $(r_{f(z)})_i = s_i$ for all $i \le j$, and $(r_{f(z)})_i = 0$ for all $i > j$. So for generation $\ell \leq k$, assume that there are two individuals $x_{\ell,1},x_{\ell,2}$ with $\ones{x_{\ell,1}^j} = \ones{x_{\ell,2}^j} = 2n/m-k$, $H(x_{\ell,1}^j,x_{\ell,2}^j) = 2(\ell-1)$, where for $\ell=1$, the individuals $x_{\ell,1},x_{\ell,2}$ are the same. They also coincide in all the other blocks, where $(r_{f(x_{\ell,1})})_i = s_i$ for $i < j$, and $(r_{f(x_{\ell,1})})_i = 0$ for $i > j$ is satisfied. Then in one trial, one may choose $x_1$ as a parent (prob. at least $1/\mu$), omit the crossover, and flip exactly two bits, namely, one from $1$ to $0$ and one from $0$ to $1$ in block $j$, both of which do not contribute to $H(x_1,x_2)$. 
Note that there are $k-(\ell-1)$ such zeros, and $2n/m-2(\ell-1)$ such ones. Hence, this happens with probability at least $(1-p_c)(k-\ell+1)/n \cdot (2n/m-(\ell-1)-k)/n \cdot (1-1/n)^{n-2} \geq (1-p_c)(k-\ell+1)/n \cdot (2n/m-2k+1)/(en) \geq (1-p_c)(k-\ell+1)/n \cdot (n/m-1)/(en) \geq (1-p_c)(k-\ell+1)/(2emn)=:q_\ell$, and in one generation with probability at least 
$$1-\left(1-\frac{q_\ell}{\mu}\right)^{\mu/2} \geq \frac{q_\ell/2}{1+q_\ell/2}.$$
These $x_1$ and $x_2$ then both survive with probability at least $1/10$: If $|F_t^1|\leq \mu$ where $|F_t^1|$ is the number of individuals with rank one, then both survive with probability $1$. Otherwise, $\mu < |F_t^1|\leq 2\mu$, and they survive with probability at least (when both are not selected within the first $\mu/2$ iterations of the reference point scheme in Algorithm~\ref{alg:Survival-Selection}) $\binom{\mu/2}{2} / \binom{|F_t^1|-\mu/2}{2} \geq \binom{\mu/2}{2} / \binom{3\mu/2}{2} \geq 1/10$ for $n$ sufficiently large.
Hence, in a sequence of $k$ generations, the probability that there is such a pair $x_1,x_2$ of individuals is at least $1/10^k \prod_{\ell=1}^k (q_\ell/2)/(1+q_\ell/2)$. For the subsequent $(k+1)$-th generation, there are two such individuals $x_1,x_2$. 
Now, choose $x_1,x_2$ as parents (prob. at least $1/\mu^2$), perform uniform crossover to create $y'$ with $(r_{f(y')})_i=s_i$ for all $i \leq j$, particularly $(y')_i = 1^{2n/m}$ and omit mutation (prob. $p_c/4^k \cdot (1-1/n)^n \geq p_c/4^{k+1}$). Hence, such a desired $y'$ is generated in this  $(k+1)$-th generation with probability at least    
$$1-\left(1-\frac{p_c}{ 4^{k+1}\mu^2}\right)^{\mu/2} \geq \frac{p_c/(2 \cdot 4^{k+1} \mu)}{1+p_c/(2 \cdot 4^{k+1} \mu)}=:p^{\text{cross}},$$
which survives by Lemma~\ref{lem:NSGAIII-General}. Hence, the probability is at least $\sigma_k=\frac{p^{\text{cross}}}{10^k} \cdot \prod_{\ell=1}^k \frac{q_\ell/2}{1+q_\ell/2}$
to create the desired $y'$ within $k+1$ generations. Hence, the random variable $Z=\sum_{j=1}^{m/2} Z_j$, is indeed stochastically dominated by $(k+1)Z^*$, where $Z^*=\sum_{j=1}^{m/2} Z_j^*$ is an independent sum of geometrically distributed random variables $Z_j^*$ with success probability $\sigma_k$. Its expectation is $m/(2\sigma_k)$. Hence, we have that 
\begin{align*}
\Pr(Z &\geq (k+1)\text{E}[Z^*] + (k+1)\lambda)
= \Pr(Z^* \geq \text{E}[Z^*] + \lambda) \\
&\leq \exp\left(-\frac{1}{4} \min \left\{\frac{\lambda^2}{d}, \lambda \sigma_k\right\}\right),
\end{align*}
for all $\lambda \geq 0$ where $d:=m/(2\sigma_k^2)$. For $\lambda = 4m \delta/\sigma_k$, where $\delta \geq 1$ is arbitrary, we observe that $\Pr(Z \geq (k+1) \cdot (1+8\delta) m/(2\sigma_k)) \leq e^{-\delta m}$ proving the lemma.
\end{proof}

Now we take a union bound to obtain with Lemmas~\ref{lem:tailboundX} and~\ref{lem:tailboundZcross} for $W:=X+Z$
\begin{align*}
\Pr&\left(W \geq \frac{(4+16m)mn \ln(2n/m)}{2(1-p_c)} + \frac{(k+1)(1+8\delta)m}{2\sigma_k} \right)\\
&\leq (2n/m)^{-4m} + e^{-\delta m}.
\end{align*}
Hence, after $O(nm \ln(n/m)/(1-p_c)+k\delta m 40^k \mu/p_c \cdot (2emn)^k/(k!(1-p_c)^k)) = O(k\delta m^{k+1}n^k (80e)^k \mu/(p_c k!(1-p_c)^k))$ generations, we see that a desired $x$ with $r_{f(x)} = s$ is created with probability at most $(2n/m)^{-4m} + e^{-\delta m}$. A union bound over at most $3^{m/2}$ choices of $x$ with different $r_{f(x)}$ values shows that, with probability at most $e^{-(3-\delta)m} + O(1/n^4)$, after this number of generations there exists for each $s \in {-1,0,1}$ an $x$ with $r_{f(x)} = s$. Conditioned on this event, the final covering step proceeds as in Theorem~\ref{thm:whole-covering} and succeeds with probability at least $1 - O(1/n^4)$ within an additional $O(m^2 n \ln(n/m)/(1-p_c))$ generations. Hence, with probability at least $1-e^{-(\delta-3)m} - O(1/n^4)$, the whole Pareto front is covered in $O(k\delta m^{k+1}n^k (80e)^k \mu/(p_c k!(1-p_c)^k))$ generations. If this does not happen, we repeat the above arguments to obtain an expected number of $O(k m^{k+1}n^k (80e)^k \mu/(p_c k!(1-p_c)^k))$ generations, where we use $\delta=4$, proving the theorem. 
\end{proof}

\section{A Lower Bound for $4$-OJZJ}

To complement the upper bound for the crossover case, we also establish a lower bound for mutation only in the case $m=4$ and for population sizes asymptotically smaller than $n^k$ by a factor of $n^2$.

\begin{theorem}
\label{lem:lower-runtime-bound}
Consider \nsgaIII on $f:=\JUMPFOUR$ for $2 \leq k \leq n/8+1$, $\mu \in O(n^{k-2})$, where crossover is turned off. 
Then the expected number of generations until the whole Pareto front is covered is at least $\Omega(n^k/\mu)$.
\end{theorem}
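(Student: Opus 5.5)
We want to show that without crossover, NSGA-III needs $\Omega(n^k/\mu)$ expected generations on $4$-OJZJ. The plan is to focus on one extreme Pareto-optimal fitness vector that can only be reached by a long jump, for example any $v\in F$ with $(r_v)_1=1$ (block $1$ equal to $1^{n/2}$). We show that, with constant probability, every individual created before such a point exists is far from it in Hamming distance. Then each offspring must flip at least $k$ specific bits, which happens with probability at most about $n^{-k}$.

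\textbf{Initialization.} By a Chernoff bound and a union bound over the $\mu = O(n^{k-2})$ initial individuals, with probability $1-o(1)$ every $x\in P_0$ satisfies $n/8 \le \ones{x^1}\le 3n/8$ (blocks have length $n/2$). So no initial individual has $\ones{x^1}> n/2-k$, and in particular none is at distance less than $k$ from $1^{n/2}$ in block $1$.

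\textbf{Invariant.} After initialization we claim the following holds until the first individual $z$ with $z^1=1^{n/2}$ is created: every individual $x$ in the population has $\ones{x^1}\le n/2-k$.
\begin{itemize}
\item Individuals with $n/2-k<\ones{x^1}<n/2$ lie in the fitness valley of block $1$. There, $f_1(x)=n/2-\ones{x^1}<k$ and $f_2(x)=k+\zeros{x^1}<2k$.
\item Such an $x$ is therefore strictly dominated by the individual obtained by replacing $x^1$ with a string having exactly $n/2-k$ ones while keeping the other blocks. That individual has values $(n/2, 2k)$ in objectives $1,2$ and agrees elsewhere.
\item Since the population always contains points whose block-$1$ values are not in the valley, valley points are never rank-one. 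One would still have to argue that they are eliminated; the relevant fact is that a point with $f_1\ge k+$something dominates them whenever it agrees on the other blocks. Since that agreement need not hold, valley individuals might in fact survive.
\end{itemize}
Because of this, a cleaner route is to drop the invariant and instead track the potential $d_t := \min_{x\in P_t}\bigl(n/2-\ones{x^1}\bigr)$ over individuals not yet at $1^{n/2}$.

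\textbf{Bounding progress in the valley.} To get from $n/2-\ones{x^1}=d$ down to $0$, either a mutation flips at least $d$ specific bits at once, or the population moves through the valley with a step of size $s$ costing probability $n^{-s}$.
\begin{itemize}
\item The fitness inside the valley increases in $f_1$ toward the far end, but the valley points are incomparable only through objective $2$, where they are worse.
\item By Lemma~\ref{lem:NSGAIII-General} style reasoning, the valley points are never protected.
\item The key claim is that any individual with $\ones{x^1}=n/2-i$, $0<i<k$, gets removed unless the first layer is small. This needs a careful domination argument that uses the other block (blocks $2$).
\end{itemize}

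\textbf{Main obstacle.} The expected main obstacle is exactly this: controlling valley individuals in objectives $1,2$ when blocks $2$ differ. For $m=4$ there is only one other block, so I would try to show that for any valley point $x$, the population contains a point agreeing on block $2$'s fitness values. By the Lemma~\ref{lem:NSGAIII-General} coverage argument, Pareto-front values of block $2$ are retained once found, and such a point then dominates $x$. With valley points killed before the next generation, each of the at most $\mu$ offspring per generation reaches $1^{n/2}$ in block $1$ with probability at most $\binom{n/2}{k}^{-1}$-ish, in fact at most $n^{-k}(1+o(1))$ from distance $\ge k$. A union bound over $\mu$ offspring over $T=c\,n^k/\mu$ generations gives failure probability at most $c$. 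Hence the expected time is $\Omega(n^k/\mu)$. The condition $\mu=O(n^{k-2})$ ensures $T\ge n^2$, so the initial phase and the domination cleanup are negligible.
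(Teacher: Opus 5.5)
Your proposal has a genuine gap, and it is exactly the step you flag as the main obstacle.

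\textbf{Where the argument fails.} Because you target every $v$ with $(r_v)_1=1$, you need that no individual with $n/2-k<|x^1|_1<n/2$ is ever used as a parent. You propose to get this from a population member with block~1 in the middle range and \emph{exactly the same} block-2 value. Nothing weaker suffices, since distinct middle values of a block are mutually incomparable. But Lemma~\ref{lem:NSGAIII-General} only says that such a value stays covered once it has been covered. It does not say the value is covered when the valley point appears.

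In the early phase, while the population is still spreading over the region $r=(0,0)$, this can go wrong. A parent with $|x^1|_1=n/2-k$ whose block-2 value lies at the edge of the currently covered range can produce an offspring that enters the valley in block~1 and simultaneously takes a block-2 value not yet present. This offspring is non-dominated in $R_t$ and can be selected; the reference-point niching even tends to favour a new fitness vector. From a valley point at depth $i<k$, only $i$ specific bit flips lead to $1^{n/2}$, which has probability about $n^{-i}\gg n^{-k}$. So the fact that this phase is short compared with $T=cn^k/\mu$ does not make it harmless, particularly for small $k$. Handling it would require a separate analysis of the valley dynamics, which the proposal does not give.

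\textbf{The missing idea.} It is the choice of target: the paper looks at the four corner vectors with $r_v\in\{-1,1\}^2$ rather than at single-block extremes. There one witness suffices and no exact matching is needed. Every middle value $(k+b,k+n/2-b)$ of a block is at least $2k$ in both coordinates, whereas every valley value is below $2k$ in both. Hence a point with block $i$ extreme and block $3-i$ in the middle dominates \emph{all} points with the same extreme block $i$ and block $3-i$ in the valley. That point is Pareto-optimal and hence retained by Lemma~\ref{lem:NSGAIII-General}. Likewise, valley--valley points are dominated by any middle--middle point.

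The paper therefore concedes that single-block extremes may be reached through the valley. It only shows that with probability at least $(1/3)^4=1/81$, for each block and each extreme value, the first extreme point created leaves the other block untouched (probability at least $(1-1/n)^{n/2}\ge 1/3$) and so keeps it in the middle range. Afterwards, every parent that could reach a corner without flipping $k$ specific bits is dominated. Each trial then succeeds with probability at most $2n^{-k}$, giving $\Omega(n^k/\mu)$.

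As a side remark, your initialization step uses a union bound over $\mu=O(n^{k-2})$ individuals. This only yields $1-o(1)$ if $\mu=e^{o(n)}$, which fails for $k$ linear in $n$. The paper's sketch tacitly relies on the same restriction.
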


\begin{proof}
By a classical Chernoff bound, with probability $1-e^{-\Omega(n)}=1-o(1)$, each individual $x$ satisfies $\ones{x^j} \in \{k,\ldots , n/2-k\}$ for both $j \in \{1,2\}$ after initialization. Suppose that this happens. Then in all future populations $P_1,P_2, \ldots$ there are only individuals $x$ with $\ones{x^j} \in \{0,k, \ldots , n/2-k,n/2\}$ for at least one $j \in \{1,2\}$, since other individuals are dominated by every $y$ with $\ones{y^j} \in \{k \ldots , n/2-k\}$ for both $j \in \{1,2\}$. 

\begin{restatable}{lemma}{notjumping}
\label{lem:not-jumping}
With probability at least $1/81-o(1)$, we have that $C_t:=\{s \in \{-1,0,1\}^{2} \mid \text{ there is }x \in P_t \text{ with } r_{f(x)}=s\} \neq \{-1,0,1\}^2$ after $cn \ln(n)$ generations for a constant $c>0$, and $C_t$ can only increase by flipping either $k$ specific ones or zeros in a block $j \in \{1,2\}$.
\end{restatable}

Assume the event from Lemma~\ref{lem:not-jumping} occurs. Then flipping $k$ specified ones or zeros within a block happens with probability at most $2n^{-k}$ per trial, so the expected number of trials is at least $(1-o(1))n^k/2 = \Omega(n^k)$. This yields an expected number of $\Omega(n^k/\mu)$ generations.
\end{proof}

\section{Conclusion}
In this paper, we derived upper runtime bounds for the widely used \nsgaIII algorithm on $\JUMP$ with an arbitrary number $m$ of objectives, both with and without crossover, in expectation and with high probability. Notably, crossover can significantly speed up the runtime and even lead to exponential improvements in suitable parameter regimes, for example when the gap parameter $k$ is large while the number of objectives and the population size are small. To complement these findings, we also established a lower runtime bound for the case $m=4$. We hope these results improve the understanding of crossover in MOEAs, especially for escaping local optima and improving search efficiency. Future work could investigate lower bounds for the crossover case, and for $m>4$ if crossover is turned off. The latter is more challenging due to population dynamics, and we still lack a clear understanding of how non-Pareto-optimal individuals behave in the population. For crossover, the challenge is that knowledge about objective vectors alone are insufficient. One has to know also about the distribution of the genotypes of individuals. Another direction is to explore crossover in more complex many-objective settings, such as combinatorial optimization problems like shortest paths, or permutation spaces, like the many-objective flow-shop scheduling problem.
\newpage

\bibliographystyle{named}
\bibliography{ijcai26}

\cleardoublepage
\appendix
\onecolumn

\section*{Supplementary Material for Paper 6164: ''On the Impact of Crossover in Many-Objective Optimization: A Runtime Analysis of NSGA-III''}

This document contains the proofs that we omitted in the main paper in full details, due to space restrictions.

\general*
\begin{proof}
    The \nsgaIII iterates through all reference points, always preferring a reference point $r$ with the fewest associated individuals chosen for $P_{t+1}$ so far, as long as $\lvert{\tilde{F}_t^{i^*}}\rvert < \mu/2$ (see Line~\ref{line:while} in Algorithm~\ref{alg:Survival-Selection}), and selecting an individual for $P_{t+1}$ associated to $r$ (see Line~\ref{line:association} in Algorithm~\ref{alg:Survival-Selection}). By Lemma~3.3 in~\cite{OprisNSGAIII}, two Pareto optimal search points with distinct fitness are associated to two different reference points. Hence, \nsgaIII iterates at least $\alpha$ times through all reference points with at least $\alpha$ many associated individuals to find $P_{t+1}$. Hence, the cover number $c_{t+1}(v)$ of $v$ with respect to $P_{t+1}$ is still at least $\alpha$.
\end{proof}

\begin{lemma}
\label{lem:function-monotone}
The following properties are satisfied for $n \in \mathbb{N}$.
\begin{itemize}
    \item[(i)] Let $c>0$. Then the function $g: \text{}]0,2cn/e[\text{} \to \mathbb{R}, x \mapsto (c+2cn/x)^{x/2},$ is strictly monotone increasing.
    \item[(ii)] The function $h: \text{}]0,n/2]\text{} \to \mathbb{R}, x \mapsto (2n/x)^{2x},$ is strictly monotone increasing.
    \item[(iii)] The function $\rho:\text{}]0,n/2[ \text{} \to \mathbb{R}, x \mapsto x/2 \cdot (2n/x)^{2x},$ is strictly monotone increasing.
\end{itemize}
\end{lemma}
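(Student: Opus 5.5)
All three claims follow the same route. Each function is strictly positive on its domain, so I would pass to the natural logarithm and show that its derivative is strictly positive on the open interior of the interval; strict monotonicity of the function then follows. For (ii) I would additionally extend to the closed endpoint $x=n/2$ by continuity. Note that (ii) and (iii) give exactly the statements used in the proof of Theorem~\ref{thm:whole-covering}: the reciprocals $h_1=1/h$ and $h_2$ (the corresponding expression $x/2\cdot(2n/x)^{-2x}$) are then decreasing.

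For (ii), write $\ln h(x) = 2x\bigl(\ln(2n)-\ln x\bigr)$. Differentiating gives
\begin{align*}
(\ln h)'(x) = 2\ln\bigl(2n/x\bigr) - 2,
\end{align*}
which is positive iff $2n/x>e$. On $]0,n/2]$ we have $2n/x\ge 4>e$, so the derivative is positive. For (iii), $\ln\rho(x)=\ln(x/2)+2x\ln(2n/x)$, and
\begin{align*}
(\ln\rho)'(x)=\frac1x+2\ln\bigl(2n/x\bigr)-2 .
\end{align*}
This is at least $1/x+2(\ln 4-1)>0$ on $]0,n/2[$, again because $2n/x\ge 4$. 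Both parts are therefore immediate once the logarithm is taken.

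For (i), write $\ln g(x)=\frac{x}{2}\ln\bigl(c+2cn/x\bigr)$. Differentiating gives
\begin{align*}
(\ln g)'(x)=\frac12\left(\ln\Bigl(c+\frac{2cn}{x}\Bigr)-\frac{2cn/x}{c+2cn/x}\right)=\frac12\left(\ln\Bigl(c+\frac{2cn}{x}\Bigr)-\frac{2n}{x+2n}\right).
\end{align*}
The subtracted term is strictly less than $1$. It therefore suffices that the logarithm is at least $1$, i.e.\ that $c+2cn/x\ge e$. This holds on the domain: for $x<2cn/e$ we already have $2cn/x>e$, and adding $c>0$ only increases the argument.

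The only step needing any care is this last comparison in (i). One has to realize that the domain bound $x<2cn/e$ is exactly what makes the logarithm exceed $1$, while the rational term is trivially below $1$. The rest is routine calculus.
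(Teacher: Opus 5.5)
Your proof is correct and follows the paper's route: logarithmic differentiation with the same derivatives, using $2n/x \ge 4 > e$ in (ii) and (iii) and $\ln(c+2cn/x) \ge 1 > 2n/(x+2n)$ in (i). You are in fact slightly more careful than the paper, since you secure strict positivity of the derivatives and work on the stated domain $]0,2cn/e[$, whereas the paper writes $]0,n/4[$.

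One side remark in your first paragraph is wrong, though it does not affect the lemma itself. Monotonicity of $h_2(x)=\frac{x}{2}(2n/x)^{-2x}$ does not follow from (iii), because $h_2$ is not $1/\rho=\frac{2}{x}(2n/x)^{-2x}$. Indeed $h_2$ is increasing for very small $x>0$.
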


\begin{proof}
(i): By taking the logarithm, it is enough to prove that $\tilde{g}: \text{}]0,n/4[\text{} \to \mathbb{R}, x \mapsto x/2 \cdot \ln(c+2cn/x),$ is strictly monotone increasing. We have for its derivative 
$$\tilde{g}'(x) = \frac{1}{2} \cdot \ln \left(c+\frac{2cn}{x}\right) -  \frac{x}{2} \cdot \frac{2n/x^2}{1+2n/x} = \frac{1}{2} \cdot \ln \left(c+\frac{2cn}{x} \right) - \frac{n}{x} \cdot \frac{1}{1+2n/x} = \frac{1}{2} \cdot \ln \left(c+\frac{2cn}{x} \right) - \frac{1}{x/n+2}.$$
The latter is at least zero if and only if $\ln(c+2cn/x) \geq  2/(x/n+2)$ which holds since $\ln(c+2cn/x) \geq 1$ due to $2cn/x \geq e$.\\
(ii): We obtain $\ln(h) = 2x \cdot \ln(2n/x)$ and hence, 
$$\ln(h)'=2\ln \left(\frac{2n}{x}\right) - 2x \cdot \frac{1}{2n/x} \cdot \frac{2n}{x^2} = 2\ln \left(\frac{2n}{x} \right) - 2 \geq 0$$
since $x \leq n/2$.\\
(iii): We obtain $\ln(\rho) = \ln(x/2) + 2x \cdot \ln(2n/x)$ and hence, $$\ln(\rho)'=\frac{1}{x} + 2\ln\left(\frac{2n}{x}\right) - 2x \cdot \frac{1}{2n/x} \cdot \frac{2n}{x^2}= \frac{1}{x} + 2\ln \left(\frac{2n}{x} \right) - 2 \geq 0$$ since $x \leq n/2$.
\end{proof}

\initialization*

\begin{proof}
Suppose $m = 2$. The probability is at least $1-e^{-\Omega(n)}$ that $k \leq \ones{x} \leq n-k$. By applying a union bound on all $\mu$ individuals, the probability is at most $e^{-\Omega(\mu n)}$ that there exists no individual
$x$ with $k \le \ones{x} \le n-k$ for all $j \in [m/2]$ after initialization. Since $\mu = \Omega(n)$, this case is proven.

If $m \geq 4$, we still can estimate the probability that $k \leq \ones{x^j} \leq 2n/m-k$ for a given block $j \in [m/2]$ by $2/3$ from below. Then the probability that all individuals $x$ are not Pareto optimal after initialization is at most $(1-c^{m/2})^\mu \leq e^{-\mu (2/3)^{m/2}} \leq e^{-(2/3+4n/(3m))^{m/2}} = e^{-\Omega(n^2)}$ where the last inequality holds since the function $g: \text{}]0,4n/(3e)[\text{} \to \mathbb{R}, x \mapsto (2/3+4n/(3x))^{x/2},$ is strictly monotone increasing (by Lemma~\ref{lem:function-monotone}(i) for $c=2/3$) and $(2/3+4n/(3m))^{m/2} = \Omega(n^2)$ for $m > 4n/(3e)$. 
Further, we used $1+x \leq e^x$ for all $x \in \mathbb{R}$.
\end{proof}

\tailboundX*

\begin{proof}
  Fix an uncovered Pareto optimal fitness vector $v \in B_j$. For $\delta>0$ we estimate the probability that a solution $\hat{y}$ with $f(\hat{y})=v$ has not been created after $4(1+\delta) \cdot n/(m(1-p_c)) \cdot \ln(2n/m)$ generations. Let $L_j^{(t)}:=\{x \in P_t \mid f(x) \in B_j\}$ be the set of all search points which cover a fitness vector from $B_j$ (which is not empty by definition of Step~$j$),  and $e_j^{(t)}:=\min\{f_{2j-1}(x)-v_{2j-1}| \mid x \in L_j^{(t)}\}$ be the minimum possible distance of search points from $P_t$ in the objective space and $v$ with respect to objective $2j-1$. 
  Hence, $e_j^{(t)}$ is just the absolute difference of the number of ones occurring 
  in block $j$ of a search point $y' \in L_j^{(t)}$ with $|f_{2j-1}(y')-v_{2j-1}|=e_j^{(t)}$ and the desired $\hat{y}$. Hence, $e_j^{(t)} \leq 2n/m-2k$, and $e_j^{(t)}=0$ if $v$ is covered. 

  For $\ell \in [2n/m-2k]$ let $X'_{j,\ell}$ be the random variable defined as the number of generations $t$ with $\ell=e_j^{(t)}$. 
  Then the number of generations until there is the desired $y$ is at most $X'_j=\sum_{\ell=1}^{2n/m-2k} X'_{j,\ell}$, and
  $X_j$ is stochastically dominated by $X_j'$. To decrease $\ell$, it suffices to choose an individual $x$ with $\ell=e_j^{(t)}$ as a parent (prob. at least $1/\mu$), omit crossover (prob. $(1-p_c)$) and flip one of $\ell$ specific bits, while not changing the other ones (prob. $\ell/n \cdot (1-1/n)^{n-1} \geq \ell/(en)$). Let $\gamma_\ell:=(1-p_c)\ell/(en)$. Then, the probability for decreasing $\ell$ in one generation is at least
  \[
  1-\Bigl(1-\frac{\gamma_\ell}{\mu}\Bigl)^{\mu/2} \geq \frac{\gamma_\ell/2}{\gamma_\ell/2+1} = \frac{(1-p_c)\ell}{(1-p_c)\ell+en} \geq \frac{(1-p_c)\ell}{4n}
  \]
  where the first inequality is due to Lemma~10 in~\cite{Badkobeh2015}. Hence, for $\tilde{n}:=2n/m$ the variable $X_j'$ (and therefore also $X_j$) is stochastically dominated by the sum $\tilde{X}_j=\sum_{\ell=1}^{\tilde{n}} \tilde{X}_{j,\ell}$ of independently geometrically distributed random variables $\tilde{X}_{j,\ell}$ with success probability $(1-p_c)\ell/(4n) = (1-p_c)\ell/(2 m \tilde{n})$. With Theorem~16 in \cite{DOERR2019115} we obtain for $\delta > 0$
    \begin{align*}
    \Pr&\left(\tilde{X}_j \geq \frac{4(1+\delta) n\ln(\tilde{n})}{1-p_c}\right) = \Pr\left(\tilde{X}_j \geq \frac{2m(1+\delta) \tilde{n}\ln(\tilde{n})}{1-p_c}\right) \leq \tilde{n}^{-\delta} = e^{-\delta \ln(\tilde{n})} 
    \end{align*}
    and thus, we obtain for $X=\sum_{j=1}^{m/2} X_j$ by a union bound
    \begin{align*}
    \Pr(X &\geq 2(1+\delta)mn\ln(\tilde{n})/(1-p_c)) \\
    &= \Pr(X \geq m/2 \cdot 4(1+\delta)n\ln(\tilde{n})/(1-p_c))\\
    &\leq \frac{me^{-\delta \ln(\tilde{n})}}{2}.
    \end{align*}
    Now we plug in $\delta=4m$ to further estimate
\begin{align}
\label{eq:help}
    \frac{m e^{-4m \ln(\tilde{n})}}{2} &= \frac{m e^{-4m}}{2} \cdot  e^{-4m(\ln(\tilde{n})-1)}\\
    &\leq \frac{m \cdot e^{-4m}}{2} \cdot e^{-8\ln(n)+8} \leq \frac{me^{-4m+8}}{2n^8},
\end{align}
    where we used that the function
    $$h: \; ]0,2n/e^2[ \; \to \mathbb{R}, x \mapsto 4x (\ln(2n/x)-1),$$
    is strictly monotone increasing (due to $h'(x) = 4 (\ln(2n/x) -1) - 4x/(2n/x) \cdot 2n/x^2 = 4 \ln(2n/x) - 4 -4x^2/(2n) \cdot 2n/x^2 = 4 \ln(2n/x) - 8 \geq 0$) which implies that the first inequality in Equation~\ref{eq:help} is satisfied for $m < 2n/e^2$. For $2n/e^2 \leq m \leq n/2$ we see that $-4m(\ln(\tilde{n})-1) < -8\ln(n)+8$ for $n$ sufficiently large (since this is equivalent to $m(\ln(\tilde{n})-1) > 2\ln(n)-2$, which is satisfied if $n/e^2 \cdot (\ln(4)-1)  > \ln(n)-1$, and the latter holds for $n$ sufficiently large), concluding the proof of Lemma~\ref{lem:tailboundX}.
    \end{proof}

\TailboundY*

\begin{proof}
Consider a fitness vector $v \in B_j$. Note that $c_t(v) \geq 1$. We show that $c_t(v) \geq \gamma := \lfloor{\mu / (2(1 + 2n/m)^{m/2})}\rfloor$ after $3207 n/(1-p_c)$ generations with probability at least $1 - e^{-4n}$. By Lemma~\ref{lem:NSGAIII-General}, $c_t(v) \leq \gamma$ cannot decrease, because any maximum set of mutually incomparable solutions $S$ satisfies $|S| \leq (2n/m + 1)^{m/2}$, which implies $c_t(v) \leq \gamma \leq \lfloor{\mu / (2|S|)\rfloor} \leq \lfloor{\mu / (2(1 + 2n/m)^{m/2})\rfloor}$. We divide the run into two phases, where the second phase only applies if $\gamma > 288n$.

\textbf{Phase 1:} We have $c_t(v) \geq \nu:=\min\{\gamma,288n\}$.\\
For $j \in [\nu-1]$ let $W_j$ be a random variable that counts the number of generations $t$ with $c_t(v)=j$. Then the number of generations until the cover number of $v$ is at least $\nu$ is at most $W:=\sum_{j=1}^{\nu-1} W_j$. Note that $c_t(v)$ can be increased in one trial by choosing an individual $x'$ with $f(x')=v$ as parent (prob. $1/\mu$), omitting crossover (prob. $1-p_c$) and flipping no bits (prob. $1/\mu \cdot (1-1/n)^n \geq 1/(4 \mu)$). Hence, the probability of increasing $c_t(v)$ in one generation is at least 
\begin{align*}
1-\left(1-\frac{1-p_c}{4 \mu}\right)^{\mu/2} &\geq \frac{(1-p_c)/8}{1+(1-p_c)/8} \\
&= \frac{1-p_c}{8 + 1-p_c} \geq \frac{1-p_c}{9}
\end{align*}
where the first inequality is due to Lemma~10 in~\cite{Badkobeh2015}. Hence, $W$ is stochastically dominated by an independent sum $W':=\sum_{j=1}^{\nu-1}W_j'$ of geometrically distributed random variables $W_j'$ with success probability $(1-p_c)/9$. Note that $\expect{W} \leq \expect{W'} \leq 9\nu/(1-p_c) \leq 2592n/(1-p_c)$ and hence, by Theorem~15 in~\cite{DOERR2019115}, we obtain for $d:= 81\nu/(1-p_c)^2$, and $\lambda \geq 0$
\[
\Pr(W' \geq \expect{W'} + \lambda) \leq \exp\left(-\frac{1}{4} \min\left\{\frac{\lambda^2}{d}, \frac{\lambda(1-p_c)}{9} \right\}\right).
\]
For $\lambda=612n/(1-p_c)$ we obtain $\Pr(W \geq 3204n/(1-p_c)) = \Pr(W \geq 2592n/(1-p_c) + 612n/(1-p_c)) \leq \Pr(W' \geq 2592n/(1-p_c) + 612n/(1-p_c)) \leq \Pr(W' \geq  \expect{W'} + 612n/(1-p_c)) \leq e^{-4n}$.
    
    \textbf{Phase 2.} We have $c_t(v) \geq \gamma$.

    We can assume that $\gamma > 288n$. Let $M_t > 288n$ be the number of individuals $x$ with $f(x)=v$. Denote by $N_t$ the number of newly created individuals within a period of $\lceil{2/(1-p_c)}\rceil$ generations. Then $\text{E}[N_t] \geq M_t/4$ since $\lceil{2/(1-p_c)}\rceil$ generations consist of at least $\lceil{2/(1-p_c)}\rceil \mu/2 \geq 2\mu/(2(1-p_c)) = \mu/(1-p_c)$ trials, and in one trial such an individual is cloned with probability at least $288n(1-p_c)(1-1/n)^n/\mu \geq 288n(1-p_c)/(4\mu) = 72n(1-p_c)/\mu$ (with probability $1-p_c$ crossover is omitted, with probability at least $M_t/\mu \geq 288n/\mu$ one such individual is chosen as parent and finally no bit is flipped with probability at least $(1-1/n)^n \geq 1/4$ during mutation). By a classical Chernoff bound, $\Pr(M_{t+1} \leq M_t + M_t/6) \leq \Pr(N_t \leq 2\text{E}[N_t]/3) = \Pr(N_t \leq (1-1/3)\text{E}[N_t]) \leq e^{-\text{E}[N_t]/18} \leq e^{-M_t/72} < e^{-4n}$. Hence, $M_{t+1} \geq \min\{M_t+M_t/6,\nu\} = \min\{7M_t/6,\nu\}$ with probability at least $1-e^{-4n}$. Note that at most $n$ such periods in a row are sufficient to obtain a cover number of $v$ of at least $\gamma$ due to $(7n/6)^n \geq \mu > \gamma$ for sufficiently large $n$, and this occurs with probability at least $1-ne^{-4n}$ by a union bound on at most $n$ such periods. These periods consist of at most $3n/(1-p_c)$ generations.
    
    Combining both phases, we see by a union bound that $c_t(v) \geq \gamma$ with probability at least $1-e^{-4n}-ne^{-4n}$ after $3204n/(1-p_c) + 3n/(1-p_c) = 3207n/(1-p_c)$ generations. Note that $c_t(v) < \gamma$ cannot decrease by Lemma~\ref{lem:NSGAIII-General}(2). Hence, by a union bound on at most $2n/m-2k+1 \leq n$ vectors $v \in B_j$ , we obtain that 
    \begin{align*}
    \Pr\left(Y_j \geq \frac{3207n}{1-p_c}\right) \leq ne^{-4n} + n^2 e^{-4n} \leq 2n^2 e^{-4n}.
    \end{align*}
    The bound on $Y$ follows also by a union bound on all single steps $j \in [m/2]$.
    \end{proof}

\TailboundZ*

\begin{proof}
For the desired search point $z$ to create, we may assume that $z_j=1^{m/2}$. The other case that $z_j=0^{m/2}$ is symmetric. Further, if $k \leq \ones{z_j} \leq 2n/m-k$, there is nothing to show, since such a $z$ already exists in the population. To create the desired $z$ with $z_j=1^{m/2}$ in one trial, one can choose a $y_e$ covering a fitness vector from $B_j$ with $\ones{y_e^j} = 2n/m-k$ as a parent (prob. at least $\gamma/\mu$), omit crossover, and flip $k$ specific bits while keeping the remaining bits unchanged (prob. $(1-p_c)(1-1/n)^{n-k}/n^k \geq 1/(e n^k)$). Consequently, the probability that this happens in a single generation is at least
$$1-\Bigl(1-\frac{(1-p_c)\gamma}{e \mu n^k}\Bigl)^{\mu/2} \geq \frac{(1-p_c)\gamma/(2en^k)}{1+(1-p_c)\gamma/(2en^k)}=:p_k$$
where the first inequality is due to Lemma~10 in~\cite{Badkobeh2015}. Now, take the independent sum $Z'= Z_1'+ \ldots + Z_{m/2}'$, where $Z_j'$ is gemoetrically distributed with success probability $p_k$. Then, $Z$ is stochastically dominated by $Z'$, and we obtain 
$$\text{E}[Z'] = \frac{m}{2} \cdot \frac{1}{p_k} = \frac{m}{2} \left(1+\frac{2en^k}{(1-p_c)\gamma}\right) = \frac{m}{2}+\frac{men^k}{(1-p_c)\gamma}.$$
Hence, for $\lambda:=4\delta m+8\delta men^k/((1-p_c)\gamma) = 4\delta m/p_k = 8\delta \text{E}[Z']$ and $d:=m/(2p_k^2)$ we obtain 
\begin{align*}
\Pr(Z \geq \text{E}[Z'] + \lambda) &\leq \Pr(Z' \geq \text{E}[Z'] + \lambda)\\
&\leq \exp\left(-\frac{1}{4} \min \left\{\frac{\lambda^2}{d}, \lambda p_k\right\}\right) \\
&= \exp(-\delta m),
\end{align*}
concluding the proof of this lemma, since 
\begin{align*}
\text{E}[Z'] + \lambda &= \frac{m}{2}+\frac{men^k}{(1-p_c)\gamma} + 4\delta m+\frac{8\delta men^k}{(1-p_c)\gamma}\\
&= \frac{m(1+8\delta)}{2}\left(1+\frac{2 e n^k}{(1-p_c) \gamma}\right). \qedhere
\end{align*}
\end{proof}

\notjumping*

\begin{proof}
Note that the probability to flip $k$ specific bits in a block is $O(1/n^k)$ which happens with probability at most $O(\mu d n \ln(n)/n^k) = o(1)$ in $d n \ln(n)$ generations, since one generation consists of at most $\mu$ trials. Note also that $\mu \in O(n^{k-2})$. So we can assume that we never create $1^{n/2}$ or $0^{n/2}$ in one block by flipping $k$ specific bits. Hence, to create an individual $x$ with $\ones{x^j} \in \{0,n/2\}$ in one block $j \in \{1,2\}$, one must mutate an individual $z \in P_t$ in block $i$ that satisfies $|\ones{z^i}-\ones{x^i}| < k$ and $\ones{z^{3-i}} \in \{k,\ldots,n/2-k\}$. One also achieves $\ones{x^{3-i}} \in \{k,\ldots,n/2-k\}$ if one does not flip any bit in $z^{3-i}$. A search point $x$ with $\ones{x^j} \in \{0,n/2\}$ and $\ones{x^{3-i}} \in \{k,\ldots,n/2-k\}$ is always kept in the population. So under the condition that a search point $x$ with $\ones{x^i} \in \{0,n/2\}$ is created in one trial, the probability that $\ones{x^{3-i}} \in \{0, \ldots , k-1, n/2-k+1, \ldots , n/2\}=:L$ is at most $1-(1-1/n)^{n/2} \leq 1-1/e \leq 2/3$. If this does not happen, $x$ dominates every search point $z^*$ with $\ones{(z^*)^i}=\ones{x^i}$ and $\ones{(z^*)^{3-i}} \in L \setminus \{0,n/2\}$, and such a $z^*$ never survives. By considering both $j \in \{1,2\}$ and $i \in \{0,n/2\}$, with probability at least $(1-2/3)^4 \geq 1/81$, individuals $z^*$ with $\ones{(z^*)^j} \in \{0,n/2\}$ for a $j \in \{1,2\}$, while $\ones{(z^*)^{3-j}} \in L \setminus \{0,n/2\}$ never survive, since those are dominated by a corresponding $x$. So suppose that this happens. Then, neither $1^n$, nor $0^n$ can be created, since this requires to flip at least $k$ specific bits in one block $i \in \{1,2\}$. Now, with Lemma~\ref{lem:tailboundX}, cover all Pareto optimal vectors $v$ with $r_v \in C_1$ with probability $1-o(1)$ within $cn \ln(n)$ generations for a suitable constant $c>0$ (by at first considering $X=X_1+X_2$ since we are building on a successful initialization and then using a union bound on at most $(1+n/2)^2$ possible fitness vectors). If $C_t$ has been enlarged in this time, consider an additional number of $cn \ln(n)$ generations. Note that $C_t$ can be enlarged at most $6$ times this way (excluding $0^n$ and $1^n$), and hence, after at most $6cn \ln(n)$ generations, all fitness vectors $v$ with $r_v \in C_t$ are covered with probability at least $1-o(1)$, but still $1^n,0^n \notin P_t$. Then, we can enlarge $C_t$ again only by flipping $k$ specific bits in one block $i \in \{1,2\}$, concluding the proof, since this happens with probability $1-o(1)$.
\end{proof} 

\end{document}